\documentclass[%
  onecolumn
]{mpi2015-cscpreprint}


\usepackage[american]{babel}

\usepackage{amsmath,amssymb}
\usepackage{graphics,graphicx,epsfig,algorithm,algpseudocode}
\usepackage{graphicx}
\usepackage{acronym}
\usepackage{bookmark}
\usepackage{subfigure}
\usepackage{adjustbox}
\usepackage{csquotes}
\usepackage{dsfont}
\usepackage[font=small,labelfont=bf]{caption}
\usepackage{todonotes}
\usepackage{mathtools}

\theoremstyle{definition}
\newtheorem{definition}{Definition}[section]
\newtheorem{theorem}{Theorem}[section]

\newtheorem{lemma}[theorem]{Lemma}
\newtheorem{remark}[theorem]{Remark}

\def\bvcnn{\ensuremath{\mathbf{v}_{\mathsf{CNN}}}}

\def\bu{\ensuremath{\mathbf{u}}}
\def\bx{\ensuremath{\mathbf{x}}}
\def\tbv{\ensuremath{\tilde {\mathbf{v}}}}
\def\dtbv{\ensuremath{\dot{\tilde {\mathbf{v}}}}}
\def\bf{\ensuremath{\mathbf{f}}}

\def\bb{\ensuremath{\mathbf{b}}}
\def\bv{\ensuremath{\mathbf{v}}}
\newcommand\bvk[1]{\ensuremath{\mathbf{v}^{(#1)}}}
\newcommand\tk[1]{\ensuremath{t^{(#1)}}}
\newcommand\brk[1]{\ensuremath{\boldsymbol{\rho}^{(#1)}}}
\newcommand\plk[1]{\ensuremath{\mathbf l^{(#1)}}}
\def\bc{\ensuremath{\mathbf{c}}}
\def\bw{\ensuremath{\mathbf{w}}}

\def\bp{\ensuremath{\mathbf{p}}}

\def\bz{\ensuremath{\mathbf{z}}}
\def\by{\ensuremath{\mathbf{y}}}
\def\bl{\ensuremath{\mathbf{l}}}
\def\bPi{\ensuremath{\boldsymbol{\Pi}}}
\def\balpha{\ensuremath{\boldsymbol{\alpha}}}

\def\bbeta{\ensuremath{\boldsymbol{\beta}}}
\def\brho{\ensuremath{\boldsymbol{\rho}}}

\def\bU{\ensuremath{\mathbf{U}}}

\def\bV{\ensuremath{\mathbf{V}}}

\def\bA{\ensuremath{\mathbf{A}}}

\def\bJ{\ensuremath{\mathbf{J}}}
\def\bI{\ensuremath{\mathbf{I}}}
\def\bM{\ensuremath{\mathbf{M}}}
\def\bN{\ensuremath{\mathbf{N}}}
\def\bS{\ensuremath{\mathbf{S}}}
\DeclareMathOperator\sftmx{softmax}
\DeclareMathOperator\dist{dist}
\DeclareMathOperator\CO{Co}
\def\RE{\mathsf{Re}}
\def\Lrec{\mathcal L_{\mathsf{rec}}}
\def\Lclt{\mathcal L_{\mathsf{clt}}}
\def\Xcl{\mathcal X_{\mathsf{cl}}}

\newacro{POD}[POD]{\emph{Proper Orthogonal Decomposition}}
\newacro{AE}[AE]{\emph{autoencoder}}
\newacro{cPOD}[cPOD]{\emph{clustered POD}}

\begin{document}
  

\title{Polytopic Autoencoders with Smooth Clustering for Reduced-order Modelling
of Flows} 
  
\author[$\ast$,$\dagger$]{Jan Heiland}
\affil[$\ast$]{Max Planck Institute for Dynamics of Complex Technical Systems, Magdeburg, Germany}

\author[$\ast$,$\dagger$]{Yongho Kim}
\affil[$\dagger$]{Department of Mathematics, Otto von Guericke University Magdeburg,
    Magdeburg, Germany\authorcr
     \email{heiland@mpi-magdeburg.mpg.de}, \orcid{0000-0003-0228-8522}
    \email{ykim@mpi-magdeburg.mpg.de}, \orcid{0000-0003-4181-7968}
 }
  
\shorttitle{PAEs with Smooth Clustering for ROM of Flows}
\shortauthor{J. Heiland \& Y. Kim}
\shortdate{}
  
\keywords{convolutional autoencoders, clustering, convex polytope, model order reduction, linear parameter-varying (LPV) systems, polytopic LPV system.}

  
\abstract{
With the advancement of neural networks, there has been a notable increase, 
both in terms of quantity and variety, in research publications concerning the application of autoencoders to reduced-order models. 
We propose a polytopic autoencoder architecture 
that includes a lightweight nonlinear encoder, 
a convex combination decoder, and a smooth clustering network. 
Supported by several proofs, the model architecture ensures that all reconstructed states lie within a polytope, 
accompanied by a metric indicating the quality of the constructed polytopes, 
referred to as polytope error. 
Additionally, it offers a minimal number of convex coordinates for 
polytopic linear-parameter varying systems 
while achieving acceptable reconstruction errors compared to proper orthogonal decomposition (POD). 
To validate our proposed model, 
we conduct simulations involving two flow scenarios 
with the incompressible Navier-Stokes equation. 
Numerical results demonstrate the guaranteed properties of the model, 
low reconstruction errors compared to POD, 
and the improvement in error using a clustering network.
}

 \novelty{
 \begin{itemize}
   \item[] An autoencoder for PDE data with a linear decoder architecture based
  on the latent variable information and a smooth cluster selection algorithm.
\item[] Guarantee by design, that the reconstructed states lie within a polytope in the
  high-dimensional state space which is beneficial for polytopic \emph{quasi
  LPV} approximations of nonlinear PDEs.
\item[] Proof of concept and comparison to standard approaches for two numerical
  flow simulations.
\end{itemize}
 }

\maketitle

\section{Introduction}\label{sec:intro}
The solution of high-dimensional dynamical systems of the form
\begin{equation} \label{eq:gen_dynsys}
  \dot \bv(t) = f(\bv(t)), \quad\text{with }\bv(t) \in \mathbb R^{n}, \quad\text{for
  time }t>0,
\end{equation}
in simulations often demands substantial computational resources and even
become infeasible due to hardware constraints.
In response to these challenges, researchers have used model order reduction methods in diverse fields such as engineering, medicine, and chemistry (see e.g., \cite{SC22Tire,NL19MD,MK21Bio,VB20POD,MF23Brake}). 
The promise and procedure of model order reduction is to design a model or
reduced dimension to enhance computational efficiency while maintaining a
desired level of accuracy. 

\emph{Proper Orthogonal Decomposition (POD)} \cite{BG93POD}, a classical model
order reduction method, has gained wide acceptance due to its 
linearity, its optimality (as a
linear projection for given measurements of the state), and the advantages of
the POD modes as an orthogonal basis derived from a \emph{truncated singular value decomposition} (see e.g., \cite{NH11SVD}) of given data. 
POD provides the reduced coordinates in a low-dimensional space through a linear projection
\begin{equation*}
\bv_r(t)=\bV\bv(t)
\end{equation*}
and reconstructs the states using a linear lifting
\begin{equation*}
\tilde\bv(t)=\bV^\top\bv_r(t),
\end{equation*}
 where $\bV\in\mathbb{R}^{r\times n}$ is the matrix including the so-called
 \emph{leading} $r$ POD modes, where $\bv(t)$ is a $n$ dimensional state, where
 $\bv_r(t)\in\mathbb{R}^r$ are the reduced coordinates (by virtue of $r\ll n$), and
 where $\tilde\bv(t)$ is the reconstructed state that approximates $\bv(t)$.

The linearity of the POD comes with many algorithmic advantages but also means
a natural limitation in terms of accuracy versus reduction potential which is commonly known as the \emph{Kolmogorov n-width} \cite{OhlR16}.
For our intended application of very low-dimensional approximations, one,
therefore, has to resort to nonlinear model reduction schemes.

In this realm, general \emph{autoencoders} (see e.g., \cite{Dor20AE}) have been widely used either in conjunction with POD or as a substitute for POD, driven by two main reasons. 
Firstly, the universality: autoencoders can be constructed in various
architectures and contexts such as convolutional autoencoders, denoising autoencoders, and
variational autoencoders; see, e.g., \cite[Ch. 14]{Ian16DL} or \cite{DiMa14}.
Secondly, the efficiency: the training of autoencoders is a classical machine
learning tasks and, thus, comes with state-of-the-art implementations in all
machine learning toolboxes.
Accordingly, the research reports on autoencoders for reduced-order dynamical
systems have reached a significant
amount both in numbers and diversity in recent years; see e.g., \cite{PaRo20,EiHa20,Ph21,MaLuBa21,FreM22,BuGl23,XuLi24,DuHe24}.

Particularly, convolutional autoencoders for model order reduction have been developed 
due to the translation invariance and sparse connectivity of convolutions;
see e.g., \cite{MaLuBa21,FreM22,BuGl23,DuHe24}.
For these reasons and what has been reported so far, we choose to utilize convolutional autoencoders in this paper.

As another attempt to overcome the limits of linear model order reduction
schemes, one approach is to use local POD bases 
for state reconstruction on subregions of the given data set; cp. e.g.,
\cite{AmsH16,AmZa12,DuJo18,KaSh22}.
To define the subsets, often referred to as \emph{clusters}, one commonly refers
to  clustering methods such as \emph{$k$-means clustering} \cite{ArVa07} and fuzzy
$c$-means clustering \cite{Du73} and problem-specific measures.

In a previous work \cite{HeiKim22}, we have considered the direct combination of
the two approaches -- clustering and separate, i.e., local
autoencoders for each cluster.
The result was a highly-performant autoencoder for very low-dimensional parametrizations of incompressible flows.
However, the involved identification of the local bases is a highly nonlinear
and even noncontinuous process and, thus, not suited for use in dynamical
systems.

In the present work, following reported efforts on developing differential
clustering algorithms (see e.g., \cite{MiKe22})
we propose a fully differentiable clustering network suitable for large-scale dimensional systems.
In addition, since the model including the clustering network is fully
differentiable, 
the clustering model parameters and other model parameters can be trained simultaneously
through a joint loss function (see e.g., \cite{AuGa19,FaTh20,MiKe22}).

Another target of the presented research is motivated from the fact that, typically, the states of dynamical systems are confined to
a bounded subset of the state space which is not contradicting but in a sense
dismissing the premise of linear model order reduction that the states reside in
a linear space.
This additional feature is like naturally ensured in our proposed
architecture by defining the reconstruction as a convex combination of
supporting vectors associated with the smoothly selected clusters.
Furthermore, with standard tools of neural networks and with identifying
\emph{pseudo-labels} for the training, we can control the number
of involved supporting vectors which translates into very low-dimensional local
bases.

As an intended side effect, 
we reason that
the convex combination-based decoding readily defines an affine parametrization
within a polytope
as it can be exploited for efficient
nonlinear controller design in the context of 
\emph{linear parameter-varying} (LPV) systems; see
\cite{PiPa95} for robust controller design for polytopic LPV systems, 
and for applications, refer to \cite{JoPa06,SeHe11,TrRi16,SyFaJa18}.

The paper is structured as follows:
In \Cref{sec:moti}, we introduce the motivation and basic ideas for the application of autoencoders, convex polytopes, and clustering.
In \Cref{sec:notation}, we define notations and state basic properties convex polytopes.
In \Cref{sec:pae}, we introduce our proposed model, \emph{Polytopic Autoencoders (PAEs)} and define polytope error.
In \Cref{sec:sim}, we assess the reconstruction performance of PAEs in comparison to POD and CAEs. Additionally, we examine the outcomes achieved with pretrained PAEs.
\Cref{sec:concl} serves as the conclusion of our study, where we summarize our findings and provide insights into potential future research directions.

\section{Motivation and Basic Ideas}\label{sec:moti}

 In view of making the connections to \emph{autoencoders} and to the intended
 applications in \emph{linear-parameter varying} (LPV) approximations to
 dynamical systems as in \eqref{eq:gen_dynsys} later, we note the different nomenclature: In an
 autoencoder context, the reduced-order coordinates $\brho$ are referred to as
 \emph{latent variable}, whereas in an LPV context, we will consider the $\brho$ a
 parametrization of the states.

General nonlinear autoencoders of type
\begin{equation}
  \bv \to \mu(\bv) = \brho \to \varphi(\brho)= \tbv \approx \bv
\end{equation}
have been successfully employed for parametrizing dynamical systems like \eqref{eq:gen_dynsys} on a very low-dimensional (if compared to, e.g., POD) manifold via 
\begin{equation}
  \dtbv(t) = \frac{d\varphi}{d\brho}\dot\brho (t) = f(\varphi (\brho(t)));
\end{equation}
see e.g., \cite{LeeC20,Ph21,BuGl23}.
The nonlinear reconstruction $\tbv(t) = \varphi(\brho(t))$ together with the evaluation of the Jacobian $\frac{d\varphi}{d\brho}$ is computationally demanding let alone the ill-posedness of inferring an unstructured nonlinear map from the low-dimensional range of $\brho$ to the high-dimensional state space.
Because of these shortcomings, a decisive performance advantage of nonlinear model order reduction over linear projections is yet to be established. 

Using local linear bases to express $\tbv$, one enforces a certain structure that pays off both in less computational effort for the reconstruction and less unknown variables in the design of the decoder $\varphi$.
In our previous work \cite{HeiKim22}, we have confirmed that with individual affine linear
decoders for a-priori identified clusters in the latent space, superior
reconstruction can be achieved with the comparatively cheap operation of
locating the current value of $\brho$ in the correct cluster.

The use of clustering as well as any other selection algorithm for local bases,
however, comes at the cost of a 
noncontinuous decoding map $\varphi$. 
Therefore, the presented work aims at providing the reconstruction performance of
local bases but with a smooth selection algorithm so that
$\frac{d\varphi}{d\brho}$ and, thus, $\tbv$ is differentiable. 
For this, we base the decoding on the Kronecker product $\balpha \otimes \brho$ of the latent variable $\brho$ and a
smooth clustering variable $\balpha=c(\brho)$ and obtain the reconstruction
as a linear combination with $\balpha \otimes \brho$ as coefficients; see \Cref{fig:pae} for an
illustration.

\begin{figure}[t]
\centering
\includegraphics[width=0.8\columnwidth]{./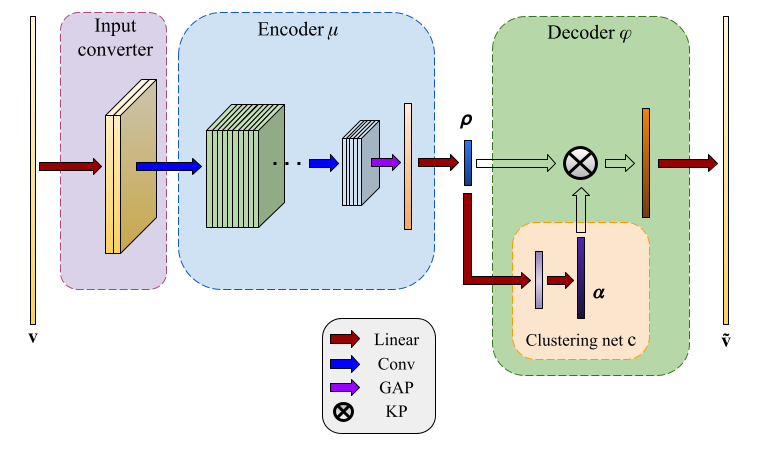}
\caption{Polytopic Autoencoder (PAE): \enquote{Linear} corresponds to a linear layer, \enquote{Conv} refers to a convolutional layer, \enquote{GAP} stands for global average pooling, and \enquote{KP} is the Kronecker product.}%
\label{fig:pae}
\end{figure}

By standard techniques from the training of neural networks, we will control the
number of nonzeros in $\balpha$ so that the reconstruction will happen with a
limited number of local basis vectors.
Even more, we can guarantee that the entries of $\balpha \otimes \brho$ are
nonnegative and sum up to one so that the reconstruction has the interpretation of
a convex combination within a polytope.

The intended manifold advantages are as follows:
\begin{itemize}
  \item The nonlinearity of the decoding is reduced to the map $\brho\to \balpha$
    between small dimensional sets.
  \item The decoding happens in a polytope defined by a small number of
    vertices.
  \item With $\balpha \otimes \brho$ being the coefficients of a convex
    combination with a small number of nonzeros, the reconstruction happens within
    a bounded set with improved stability due to the reduction of summations.
  \item The map $\balpha \otimes \brho \to \tbv$ is linear and provides an
    approximative polytopic expansion of the state $\bv$, which is useful in the
    numerical treatment of LPV systems.
\end{itemize}

Concretely, the proposed autoencoder comprises four key components:
 \begin{enumerate}
 \item A linear input converter that interpolates the spatially distributed data to a
   rectangular grid as it is needed for conventional convolutional layers in a
   neural network.
\item A nonlinear encoder which maps the high dimensional input states onto low
  dimensional latent variables with the final layer designed such that the
  latent variables are all positive and sum up to one (to later serve as
  coefficients for a convex combination (cp. \Cref{def:cc})) of supporting
  vectors.
\item A nonlinear smooth clustering network, responsible for locating the latent
  variables within one or more of $k$ clusters.
\item A linear decoder, which turns the latent variable and the clustering
  result into a reconstruction of a high-dimensional state within a polytope.
\end{enumerate}

In terms of equations, the map of an input $\bv$ to the reconstruction $\tilde \bv$
will read
\begin{subequations}
\begin{align*}
\bvcnn&=\bI_C\bv \\
\brho&=\mu(\bv) \\
\balpha&=c(\brho)\\
\tilde \bv &= \varphi(\brho)=\bU(\balpha\otimes \brho)
\end{align*}
\end{subequations}
where $\bv$ is the data, $\bvcnn$ is the input for a CNN,
$\bI_C$ is an interpolation matrix, $\bU$ is the matrix of all
supporting vectors, $\brho$ is the latent state, and $\tilde\bv$ is the
reconstruction designed to well approximate $\bv$. 
In view of applications to dynamical equations, we note that all involved
mappings are differentiable so that a differentiable (in time $t$) input results
in a differentiable output by virtue of
\begin{equation*}
\frac{d \tilde\bv}{dt} = \frac{d \varphi}{d \brho}\frac{d \mu}{d \bv}\frac{d \bv}{dt}
= \bJ_D\bJ_E\frac{d \bv}{dt}
\end{equation*}
where $\bJ_D$ and $\bJ_E$ are the Jacobian matrices of the decoder and encoder respectively.

\section{Preliminaries and Notation}\label{sec:notation}

We introduce basic notations and definitions concerning convex combinations and
polytopes and lay out details of the various components of the proposed
architecture.

\begin{definition}
  \label{def:cc}
Let $\mathcal{U}:=\{\bu_1,\bu_2,\cdots , \bu_n\}\subset \mathcal V$ be a finite
subset of a real vector space $\mathcal V$.
  \begin{enumerate}
    \item  Let $\lambda_1, \lambda_2, \cdots, \lambda_n$ be nonnegative real values
      satisfying $\sum_{i=1}^n\lambda_i=1$. Then $\sum_{i=1}^n \lambda_i \bu_i$ is
      said to be a \emph{convex combination} of the vectors $\bu_1,\bu_2,\cdots
      , \bu_n$ and
    \item the \emph{convex hull} of $\mathcal{U}$ is defined and denoted as
\begin{equation*}
\CO(\mathcal{U})=\{ \bz\in \mathcal V \,| \, \bz \text{ is a convex combination of the vectors of
} \mathcal U \} 
.
\end{equation*}
\item Moreover, if for any subset $\bar{\mathcal U}\subsetneq \mathcal U$ it
  holds that $\CO(\bar{\mathcal U}) \subsetneq \CO(\mathcal U)$, then we call
  $\CO(\mathcal U)$ a \emph{convex polytope}. 
\end{enumerate}
\end{definition}

\begin{remark}
  By construction, the convex hull $\CO (\mathcal U)$ is convex. This means that
  for any $z_1$, $z_2 \in \CO (\mathcal U)$ and $\lambda\in[0,1]$, we have $z=\lambda z_1 + (1-\lambda)z_2 \in \CO (\mathcal U)$.
  In what follows, we generally assume that $\CO(\mathcal U)$ is a convex
  polytope which basically means that the vectors in $\mathcal U$ are linearly
  independent.
\end{remark}

\begin{lemma}\label{lem:convc}
Let $\brho=[\rho_1,\rho_2, \cdots, \rho_r]^\top$ and
$\balpha=[\alpha_1,\alpha_2, \cdots, \alpha_k]^\top$ with $\rho_i\geq0$,
$i\in\{1,2,\cdots , r\}$, $\sum_{i=1}^r\rho_i=1$, $\alpha_j\geq0$,
$j\in\{1,2,\cdots , k\}$ and $\sum_{j=1}^k\alpha_j=1$. Then the entries
$\alpha_{ij}=\alpha_i\rho_j$ of $\balpha\otimes \brho \in \mathbb R^{kr}$ are
positive and sum up to one.
\end{lemma}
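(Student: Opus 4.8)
The plan is to verify the two claimed properties of the entries of $\balpha \otimes \brho$ directly from the definition of the Kronecker product together with the hypotheses on $\brho$ and $\balpha$. Recall that $\balpha \otimes \brho \in \mathbb{R}^{kr}$ is exactly the vector collecting all pairwise products $\alpha_i \rho_j$, so the statement reduces to two elementary claims about these products: their sign and their total sum.

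First I would establish the sign property. Since by hypothesis $\alpha_i \geq 0$ and $\rho_j \geq 0$ for every admissible pair of indices, each entry $\alpha_{ij} = \alpha_i \rho_j$ is a product of two nonnegative reals and is therefore itself nonnegative. I would remark here that the lemma's wording \enquote{positive} is best read as \enquote{nonnegative}: strict positivity of every entry would require all $\alpha_i$ and all $\rho_j$ to be strictly positive, whereas the stated hypotheses admit zeros. Admitting zeros is in fact the desired behaviour, since vanishing entries of $\balpha$ are precisely what encodes the sparse, small-dimensional cluster selection that motivates the construction.

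Second, I would verify the normalisation by factoring the single sum over all $kr$ entries into a product of the two one-dimensional sums:
\begin{equation*}
\sum_{i=1}^{k}\sum_{j=1}^{r} \alpha_{ij}
= \sum_{i=1}^{k}\sum_{j=1}^{r} \alpha_i \rho_j
= \Bigl(\sum_{i=1}^{k}\alpha_i\Bigr)\Bigl(\sum_{j=1}^{r}\rho_j\Bigr)
= 1 \cdot 1 = 1,
\end{equation*}
where the separation of the double sum uses distributivity of multiplication over addition, and the final equality invokes the two normalisation hypotheses $\sum_i \alpha_i = 1$ and $\sum_j \rho_j = 1$. Combined with the sign property, this shows that the entries of $\balpha \otimes \brho$ are precisely the coefficients of a convex combination in the sense of \Cref{def:cc}.

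There is no genuine obstacle in this argument; it collapses to a one-line computation once the index bookkeeping is fixed. The only point that warrants a little care is the consistency of the enumeration convention between the $kr$-vector $\balpha \otimes \brho$ and the double-indexed entries $\alpha_{ij}$, i.e.\ confirming that the pairs $(i,j)$ with $i \in \{1,\dots,k\}$ and $j \in \{1,\dots,r\}$ index each product exactly once, so that the factorisation of the sum is legitimate. Beyond that, clarifying the nonnegativity-versus-positivity reading (so that the zero entries needed for sparsity remain admissible) is the only subtlety worth flagging.
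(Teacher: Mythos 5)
Your proof is correct and follows essentially the same route as the paper's: nonnegativity of each product $\alpha_i\rho_j$ from the sign hypotheses, and factoring the double sum into $\bigl(\sum_i \alpha_i\bigr)\bigl(\sum_j \rho_j\bigr) = 1$. Your remark that \enquote{positive} should read \enquote{nonnegative} is apt, as the paper's own proof likewise only establishes nonnegativity, which is exactly what the sparse cluster selection requires.
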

\begin{proof}
To begin, we note that by $\rho_i\geq 0$, $\alpha_j\geq 0$, we have that
$\rho_i\alpha_j$ is nonnegative for all $i$ and $j$. 
Moreover, for the sum over the elements of $\balpha\otimes \brho$, we have that
\begin{equation*}
 \sum_{i=1}^k \sum_{j=1}^r\alpha_i\rho_j=\sum_{i=1}^k \alpha_i
 \sum_{j=1}^r\rho_j =(\alpha_1+\alpha_2+\cdots + \alpha_k)(\rho_1+\rho_2+\cdots + \rho_r)=1.
\end{equation*}
\end{proof}

Throughout the paper, we adopt the following notation. Since we consider data
from dynamical systems, the involved variables $\bv$, $\bvcnn$, $\brho$,
$\balpha$ can be seen as functions of time $t$. Where appropriate, we will drop
these time dependencies and write, e.g., $\brho = \mu(\bvcnn)$. 
For general data points we will write, e.g., $\bv(t)$. Further down the text,
where we consider data that was sampled at time instances $t^{(k)}$, we write, e.g.,
$\bvk:=\bv(t^{(k)})$. A subscript like in $\rho_i(t)$ will denote the $i$-th
component of a vector-valued quantity or an enumerated set of items (like
columns of a matrix). 

\section{State Reconstruction within a Polytope}\label{sec:pae}
In this section, we provide detailed information about \emph{Polytopic Autoencoders (PAEs)} as illustrated in \Cref{fig:pae}.
Specifically, we demonstrate the process of feeding data generated by the Finite Element Method (FEM) to Convolutional Neural Networks (CNNs). 
We detail the design of a lightweight convolutional encoder architecture using
depthwise and pointwise convolutions, which significantly reduces the number of
model parameters compared to general full linear layers and also to POD. 
We also explain the methods for clustering reduced states in a low-dimensional space, 
reconstructing states within a polytope, 
training PAEs, 
and evaluating approximation errors in the polytopes that are defined by the PAEs.

\subsection{Input converter}
We consider data that comes from FEM simulations on possibly nonuniform meshes
which are not readily suited as inputs for convolutional neural networks. 
Rather than resorting to particular techniques like graph-convolutional neural
networks in this context (see e.g., \cite{PicMH23}),
we interpolate the data on a tensorized grid (cp. \cite{HeiBB22}) which is
efficiently realized through a very sparse (only 0.03\% nonzero entries)
interpolation matrix $\bI_C$.
As the data can be multivariate, we consider the interpolation result
 $\bvcnn(t)=\bI_C\bv(t)$ of a data point $\bv(t)$ as a three-dimensional tensor in $\mathbb R^{C\times
H \times W}$), where  $C$ stands for the number of input channels (i.e., the
dimension of the data) and $H$ and $W$ are the number of data points (i.e., the
number of pixels) in the two spatial dimensions.

\subsection{Encoder}\label{sec:encoder}
As the encoder 
\begin{equation*}
\mu\colon \mathbb{R}^{C\times H \times W}\rightarrow \mathbb{R}^{r}\colon
\bvcnn(t)\to \brho(t),
\end{equation*}
we set up a convolutional neural network with the final layer employing a
\emph{softmax} function that ensures for the output vector
$\brho(t)=\mu(\bvcnn(t))$ that all components are positive and sum up to one, i.e.,
\begin{equation}\label{eq:rho_cnvx_coeffs}
\rho_i(t)\geq 0, \quad \text{for }i=1,\dotsc,r \quad \text{and }\sum_{i=1}^{r}\rho_i(t)=1,
\end{equation}
and with, importantly, the reduced dimension $r$ being significantly smaller
than $n$. 

The \emph{softmax function} 
\begin{equation*}
  \bx \to \begin{bmatrix}
\frac{e^{x_i}}{\sum_{j=1}^{r} e^{x_j}}
\end{bmatrix}_{i=1,\dotsc,r} 
\end{equation*}
that is commonly used in machine learning to produce outputs in the form of a probability distribution 
appeared unsuited for the intended selection of vertices of a polytope later.
In fact, the standard \emph{softmax} never attains zero exactly and requires
rather large (in magnitude) input values for outputs close to zero.
That is why, we utilized a modified \emph{softmax} function
\begin{equation}\label{eq:msoftmax}
\sftmx(\bx)_i=\frac{x_i\cdot \tanh(10x_i)}{\sum_{j=1}^{r} x_j\cdot
\tanh(10x_j)}.
\end{equation}

As the signs of $x$ and $\tanh(x)$ always coincide, the (differentiable) function $x\to
x\cdot \tanh(10x)$ produces nonnegative values so
that the $\sftmx$ as defined in \eqref{eq:msoftmax} ensures the desired
property \eqref{eq:rho_cnvx_coeffs} for its output to potentially serve as
coefficients of a convex combination.

\begin{figure}[t]
\centering
\includegraphics[width=0.6\columnwidth]{./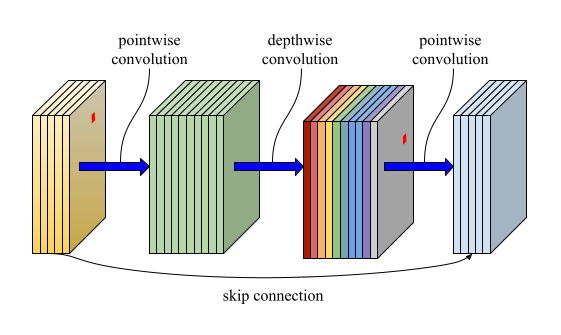}
\caption{Inverted residual block: an efficient approach for designing deep convolutional layers with fewer parameters compared to standard convolutions. (\Cref{sec:encoder})}%
\label{fig:encoder}
\end{figure}
\begin{remark}
  In view of treating high-dimensional data, the number of parameters in the
  neural networks is a pressing issue in terms of training efficiency but also
  memory requirements and computational efforts for forward evaluations.
To utilize a larger receptive field in the encoder with fewer parameters, we construct deep neural networks using convolution blocks that include a depthwise convolution and two pointwise convolutions as illustrated \Cref{fig:encoder}. A standard convolution operation extracts feature maps using a $K\times K\times C_O \times C_I$ kernel where $K$ represents the kernel size, $C_O$ is the number of output channels, and $C_I$ is the number of input channels. 
To soften possibly large memory requirements, one possibly may decompose the standard convolution into a depthwise convolution and a pointwise convolution, creating what is called \emph{depthwise separable convolution} \cite{MarAnd18}. 
We explain in \nameref{sec:app-dsc} how depthwise separable convolution utilizes fewer model parameters than the standard convolution.
\end{remark}

\subsection{Differentiable clustering network}\label{sec:clstr-net}
Naturally most popular clustering methods like the the $k$-means clustering
(see e.g., \cite{Scu10Kmeans})
classify data using discontinuous functions such as $\min$ and
$\max$ functions which are non-differentiable.
To alleviate this problem while maintaining the advantages of clustering for
reconstruction,
we resort
to a differentiable clustering network, that is implemented on the
$r$-dimensional latent space and defined as
\begin{subequations}\label{eq:clstrng-network}
\begin{align*}
c\colon \mathbb{R}^{r}\rightarrow \mathbb{R}^{k}\colon \brho(t) \to \balpha(t),
\end{align*}
\end{subequations}
where $c$ is a multi-layer \emph{perceptron} with the modified softmax function
$\sftmx(\bx)$, cp. \eqref{eq:msoftmax}, in the last layer.

In view of reconstruction within a polytope of a limited number of vertices, we
intend to ensure that $\balpha$ only has a small number of nonzero values.
Basically, a single nonzero entry of $1$ will correspond to a cluster
selection whereas a few nonzero entries will describe a smooth transition
between the clusters.

Although for the modified softmax functions, in theory, the range covers the
closed interval $[0,
1]$ the values will not be exactly zero or one in practice (except from a few rare cases).
Thus, we enforce decisive selections of clusters by including the selection of
pseudo labels for training the clustering network $c$; see \Cref{sec:tr}.

\subsection{Decoder}\label{sec:decoder}
We recall the approach of \emph{individual convolutional autoencoders (iCAEs)}
\cite{HeiKim22} that, for a single nonlinear encoder
\begin{equation*}
\brho=\mu(\bvcnn),
\end{equation*}
bases the reconstruction on $k$ individual (affine) linear decoders
\begin{equation*}
\tilde \bv=\bU_{l}\brho+\bb_{l}, \, l=1,2,\cdots , k
\end{equation*}
on $k$ clusters. Here, $\bU_l\in\mathbb{R}^{n\times r}$ and
$\bb_l\in\mathbb{R}^{n}$ are a matrix of local basis vectors and a bias for the reconstruction of states in the $l$-th cluster. 

If one leaves aside the bias terms,
then the reconstructed states $\tilde \bv$ can be described as a discontinuous decoder 
\begin{equation*}
\tilde \bv=\sum_{i=1}^{k}\beta_i\bU_i\brho=\sum_{j=1}^{r}\sum_{i=1}^{k}\beta_i\rho_j\bu_{i;j}
\end{equation*}
where $\beta_i$ is $i$-th element of a vector
\begin{equation*}
\bbeta = \begin{cases*}
  1 & if $i = l$,\\
  0            & if $i\neq l$
\end{cases*}
\end{equation*}
generated by $k$-means clustering 
(i.e., all its elements are 0 except for a single element which has a value of 1), and $\bu_{i;j}$ is the $j$-th column vector of $\bU_i$. 
This decoder is to select an individual decoder depending on the cluster which
is a nonsmooth operation where the states leave one cluster for another. 

Here, we replace the selection vector $\bbeta$ by the output $\alpha$ of a
smooth clustering network $c$ that allows for several nonzero entries and, thus,
enables smooth transitions between clusters:
\begin{subequations}\label{eq:decoder}
\begin{align*}
\tilde \bv&=\sum_{j=1}^{r}\sum_{i=1}^{k}\alpha_i\rho_j\bu_{i;j} \\
&=\bU(c(\brho)\otimes \brho)
\end{align*}
\end{subequations}
where $\alpha_i$ is $i$-th element of the smooth clustering output
$\balpha=c(\brho)$,
where $\otimes$ is the Kronecker product, and where
\begin{equation*}
\bU=
\begin{bmatrix}
| & | &  & | & & | & & |\\
\bu_{1;1} & \bu_{1;2}& \cdots & \bu_{1;r} & \cdots & \bu_{k;1}& \cdots & \bu_{k;r}\\
| & | &  & | & & | & & |\\
\end{bmatrix}
\end{equation*}
is the matrix of all supporting vectors for the reconstruction.

\begin{remark}\label{rem:reconst_polytope}

Due to the modified softmax function in the last layer of $\mu$ and $c$, it
holds that $\brho(t) = \mu (\bvcnn(t))$ and $\balpha(t) = c(\brho(t))$ oth
satisfy property \eqref{eq:rho_cnvx_coeffs},
so that by Lemma \ref{lem:convc}, the decoder output $\varphi(\brho(t))$ is a convex combination of the column vectors of $\bU$.
Consequently, all reconstructed states are generated within a
polytope $\tilde{\mathcal{V}}$ defined by the vertices $\bu_{1;1}, \dotsc , \bu_{k;r}$.
\end{remark}

\subsection{Training Strategy for PAEs}\label{sec:tr}
\begin{figure}[t]
\centering
\includegraphics[width=0.5\columnwidth]{./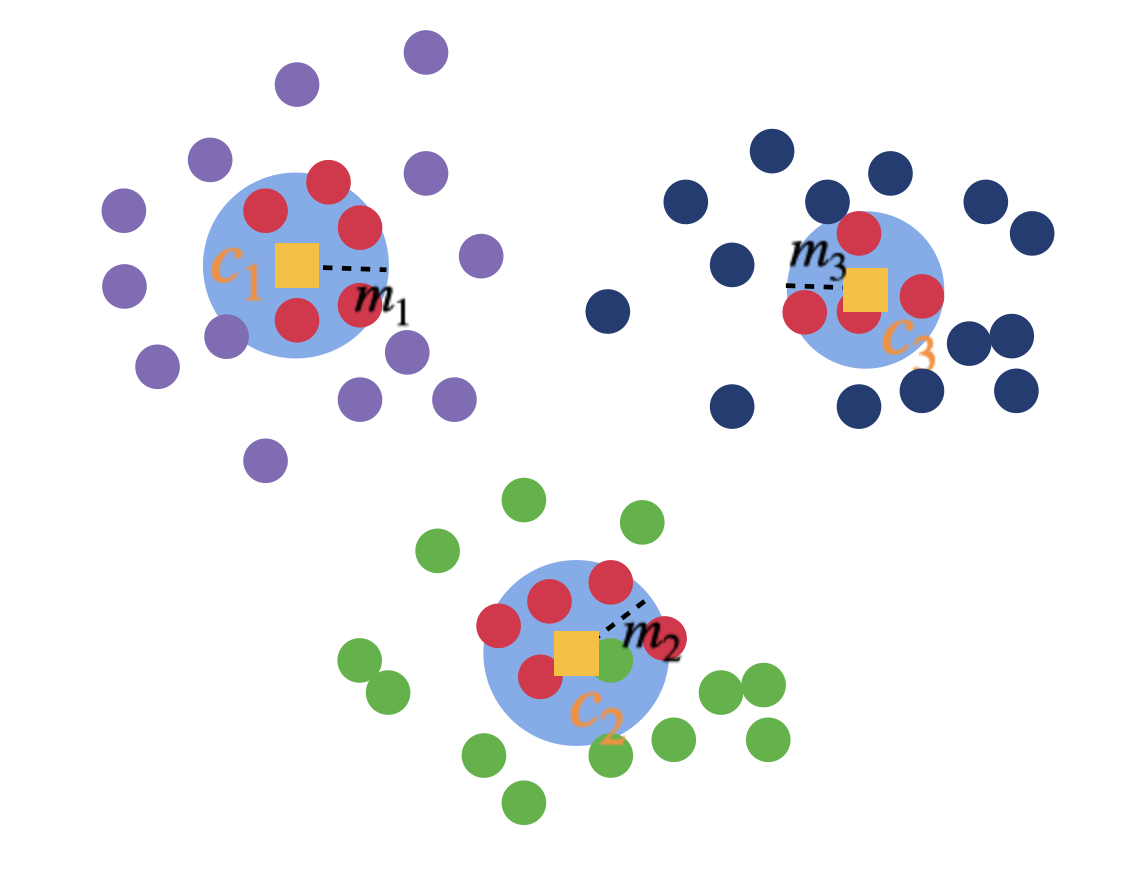}
\caption{When latent variables are divided into three clusters in a low-dimensional space, the clustering labels corresponding to the latent variables (red circles) within each circle with radius $m_i$, $i=1,2,3$, are chosen as target labels. Unselected labels are not used for training PAEs. (\Cref{sec:tr})}
\label{fig:tlabels}
\end{figure}

The proposed encoder, decoder, and clustering network are fully differentiable
with respect to input variables and all model nodes, enabling a joint
optimization of all model parameters using gradient-based techniques (e.g., Adam \cite{Kin15ADAM}). 
Nonetheless, in view of training efficiency and since the smooth clustering
network will be trained by means of pseudo-labels obtained from $k$-means
clustering, the overall training strategy includes two preparatory steps. 
In a final step, all components are then fine-tuned in a joint optimization.

We comment on these three training steps with details deferred to the
technical description in \nameref{sec:app-train}.

\textbf{Step 1} (initialization and identification of the latent space): We train a CAE consisting of a nonlinear encoder $\mu$ and a polytopic decoder $\bar\varphi$.

\textbf{Step 2} ($k$-means clustering for generating pseudo-labels and
initialization of individual decoders): From the pretrained encoder, we obtain
the latent variable coordinates for the training data which are then used for
$k$-means clustering in the latent space. The clustering results are then 
used both for generating target labels for a supervised training of the smooth
clustering network $c$ and for training individual decoders in the clusters.

In order to avoid overfitting of the $k$-means results and to shift the focus
away from the cluster centers to the region around them, we assign the pseudo
labels to represent areas around the centroids and train the clustering network
to best match the pseudo-labels in a distributional sense.

Concretely, for each cluster $i$, we select those $j(i)$ data points for which the
latent coordinates satisfy
\begin{equation}\label{eq:targetlabels}
  \| \brho^{(j(i))}-\bc_i \| <m_i, \, i=1,2,\cdots , k
\end{equation}
where $\bc_i$ is the centroid of the $i$-th cluster and where $m_i$ is the mean of the
distances between the latent variables and $\bc_i$ in this cluster.
To these selected datapoints we assign the $i$-th unit vector as the
pseudo-label.

With this procedure repeated for all clusters, we collect a set of data/labels
pairs
\begin{equation}\label{eq:xcl} 
\Xcl :=\{(\bv^{(1)}, \plk 1), (\bv^{(2)}, \plk 2),
\dotsc ,(\bv^{(N_l)}, \plk{N_l})\}
\end{equation}
where $N_l$ is the overall number of data
points selected by the criterion \eqref{eq:targetlabels}, where $\bvk{i}$ is the
velocity data so that $\brk{i}=\mu(\bvk{i})$, and where the labels $\plk{i}$
are unit vectors in $\mathbb R^{r}$ representing the corresponding cluster.

In a next preparatory step, the clustered and labelled data $\mathcal X_{\mathsf{cl}}$ is
used to train individual convex combination-based decoders $\varphi_{1}, \cdots,
\varphi_{k}$ on the $k$ clusters.

Then, the matrix  $\bU=[\theta_{\varphi_{1}}, \cdots, \theta_{\varphi_{k}}]$
that collects all supporting vectors of the individual decoders is used to
initialize the weights of a global, smooth and clustering-based decoder.

\textbf{Step 3} (training of the clustering net and fine-tuning of the PAE): we
train a PAE by fine-tuning $\mu$ and $\bU$ while simultaneously optimizing the
clustering network $c$.
For the model optimization, we define a reconstruction loss as
\begin{equation*}
\Lrec = \frac{1}{|B|}\sum_{i\in B}\parallel \tilde \bv^{(i)} - \bv^{(i)}\parallel_\bM
\end{equation*}
where $B$ is the
index set of a data batch drawn from the training data.
This loss function is the standard \emph{mean squared error} loss but in the
$\mathbf M$-norm that reflects the PDE setup.
Also, we define a clustering loss as the cross entropy loss 
\begin{equation*}
  \Lclt = -\frac{1}{|P|}\sum_{j\in P\subset B} \plk{j}\cdot \log(c(\brk{j})).
\end{equation*}
where $j$ comes from that subset $P\subset B$ that contains only those indices that
address data that is part of the clustered and labelled data set $\Xcl$ too; cp. \eqref{eq:xcl}.
The cross-entropy loss function describes the distance 
between two probability distributions 
and is commonly used as a loss function 
for training multi-class classification machine learning models. Note that both
the labels $\plk{j}$ (as unit vectors representing a sharp uni-modal
distribution) and the clustering output $\balpha^{(j)}=c(\brk{j})$ (by virtue of the $\sftmx$~in the final
layer; cp. \Cref{sec:clstr-net}) represent probability distributions.

Finally, we consider the joint loss function
\begin{equation*}
  \mathcal{L} =\Lrec +10^{-4}\Lclt,
\end{equation*}
where the weight $10^{-4}$ turned out to be a good compromise between accuracy
and overfitting.

\subsection{Polytope Error and Polytopic LPV Representation}\label{sec:polyerr}
\begin{figure}[t]
\centering
\includegraphics[width=0.35\columnwidth]{./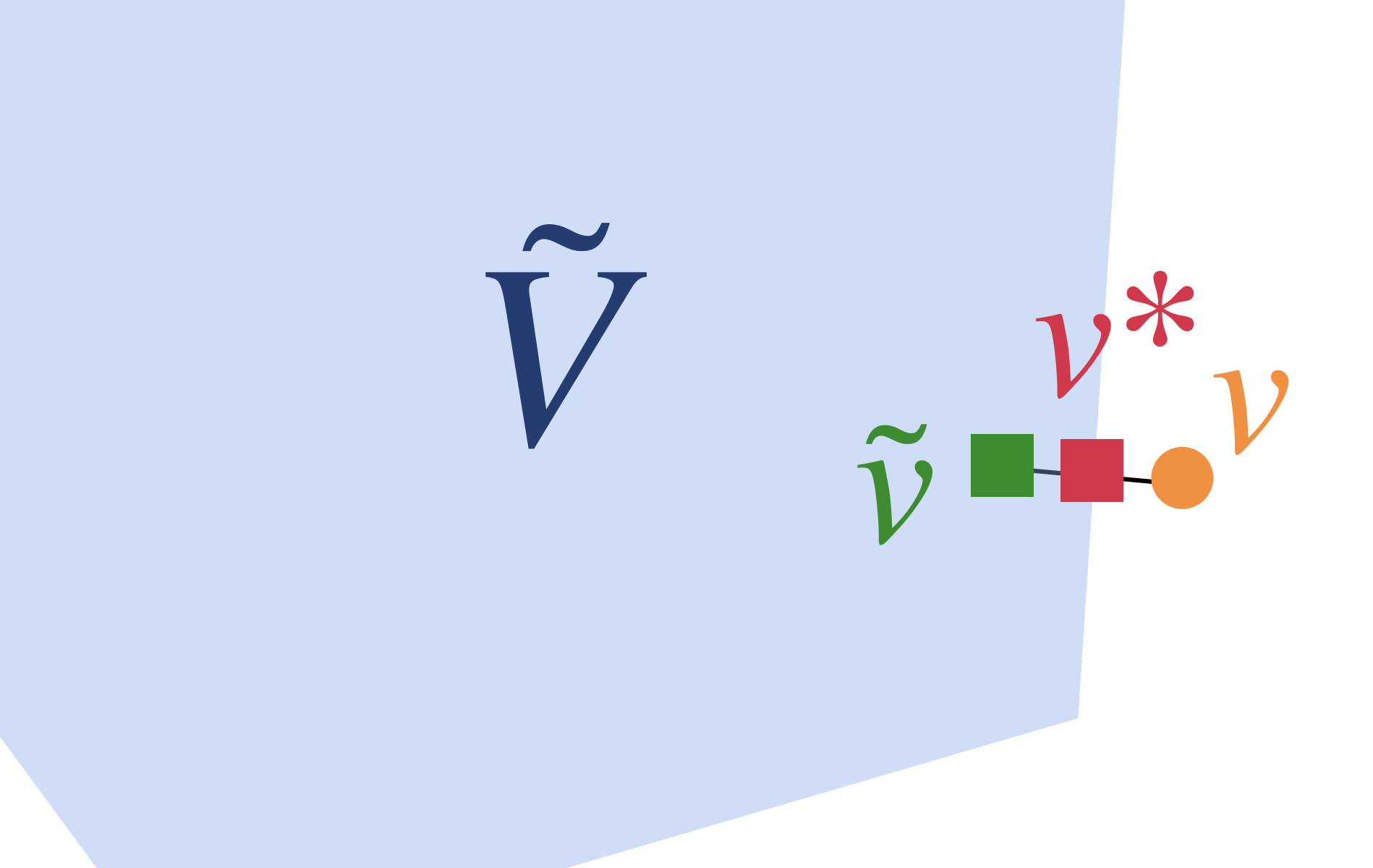}
\caption{Conceptual figure depicting the positions of a state $\bv$, it's
reconstruction $\tbv$ in a polytope, and it's best approximation $\bv^*$. (\Cref{sec:polyerr})}%
\label{fig:vstar}
\end{figure}

By design, the reconstruction $\tbv$ is generated inside a polytope, cp.
\Cref{rem:reconst_polytope}. We denote this polytope by $\mathcal V$.

\begin{definition}[Polytope error and best approximation]\label{def:pe}
  Let $\mathcal{V} \subset \mathbb R^{n}$ be a convex polytope. For a given data
  point $\bv\in \mathbb R^{n}$, let $\|\cdot \|_{\mathbf M}$ be the norm that is
  induced by the $\mathbf M$-weighted inner product and let
  \begin{equation*}
    \dist_{\mathbf{M}}(\bv, \mathcal V) := \min_{\bw\in \mathcal V}\|\bv-\bw\|_\mathbf{M}
  \end{equation*}
  be the \emph{polytope error} and let $ \bv^\ast\in{\mathcal{V}} $ be the
  \emph{best approximation} that realizes the minimum.
\end{definition}

We note that as shown in the appendix in \Cref{lem:dist}, the \emph{polytope
errors} and the \emph{best approximation} is well defined; see also \Cref{fig:vstar}
that illustrates the conceptual representation of the polytope error. 

In the numerical experiments, for the data at hand, we will evaluate the polytope error for the
polytope that is identified and used for the encoding and the reconstruction by
the PAE.
This provides best-case estimates for
\begin{enumerate}
  \item how well the identified polytope can represent the data and
  \item how close the reconstruction gets to this theoretical lower bound.
\end{enumerate}

The computation of the best approximation is a nontrivial task.
To compute the \emph{polytope error} for a given $\bv$, we solve the optimization problem
\begin{subequations}
\begin{align*}
  \min_{\rho \in \mathbb R^{r}} \|  \bU\brho-\bv\|_\bM^2\\
&\text{subject to}\,\, \phantom{\bI_r}\brho\geq 0,\\
&\quad\quad\quad\quad\,\,\,\mathds{1}_r\brho= 1,
\end{align*}
\end{subequations}
for the coordinates $\brho^\ast$ of the best approximation $v^\ast = \bU\brho$,
where $\bU$ is the matrix of the vertices (cp. \Cref{rem:reconst_polytope}),
and where $\mathds{1}_r=[1,1,\cdots
, 1]$ is the row vector of ones with $r$ entries. 

This convex quadratic programming problem involving linear constraints does not
have closed-form solution but numerical methods including active-set methods and interior-point methods can efficiently find solutions.
In the experiments, we use a quadratic programming solver 
provided by the Python library \texttt{cvxopt} which implements interior-point methods \cite{Ma11OPT}.

\subsection{Application in Polytopic LPV Approximations}\label{sec:lpvapprox}

We briefly comment on the intended application in approximating general
nonlinear functions $f\colon \mathbb R^{n}\to \mathbb R^{n}$ by so-called
linear-parameter varying approximations of preferably low parameter dimension. 
Such approximations are a promising ingredient for nonlinear controller design;
see \cite{DasH23} for the basic theory and proofs of concepts. 

An intermediate step is the representation of $f$ in state-dependent coefficient
form
\begin{equation*}
 f(\bv) = A(\bv)\,\bv
\end{equation*}
which always exists under mild regularity conditions and in the case that
$f(0)=0$. 
If then $A(\bv)\approx A(\tbv)$ is approximated by the autoencoded state
$\tbv = \mu(\brho(\bv))$, an LPV approximation with $\brho=\brho(\bv)$ as the
parameter is obtained by means of
\begin{equation*}
  f(\bv) \approx A(\tbv)\,\bv = A(\mu(\brho(\bv))\,\bv =: \tilde A(\brho) \, \bv.
\end{equation*}

As so-called affine linear polytopic LPV representations are of particular
use for controller design, we show how a polytopic reconstruction with $\balpha \otimes \brho$ 
transfers to a polytopic LPV approximation with now $\balpha \otimes \brho
$ as the parameters.

\begin{lemma}\label{lem:lpv}
  Let $\bA(\cdot)\colon \mathbb R^{n}\to \mathbb R^{n\times n}$ be a linear map
  and let $\tilde{\bv}(t)$ be a (convex) combination of $n$ vertices of a polytope.
  Then $\bA(\tilde{\bv}(t))$ is a (convex) combination of $n$ matrices
  representing a polytope in $\mathbb R^{n\times n}$.
\begin{proof}
Since $\tilde{\bv}(t)$ is a (convex) linear combination of $n$ vertices, $\tilde{\bv}(t)$ can be expressed as
\begin{equation*}
\tilde{\bv}(t)=\sum_{i=1}^{n}\zeta_i\bu_i
\end{equation*}
where $\zeta_i \in \mathbb R^{}$ ($\zeta_i \geq0$, $\sum_{i=1}^{n}\zeta_i=1$) and $\bu_i$ is $i$-th vertex of a polytope, $i\in\{1,2,\cdots , n\}$. Then
\begin{equation*}
 \bA(\tilde{\bv}(t))=\bA(\sum_{i=1}^{n}\zeta_i\bu_i)
\end{equation*}
Since $\bA$ is linear in it's argument, we have that
\begin{equation*}
 \bA(\tilde{\bv}(t))=\bA(\sum_{i=1}^{n}\zeta_i\bu_i)=\sum_{i=1}^{n}\zeta_i\bA_i
\end{equation*}
where $\bA_i:=\bA(\bu_i)$.
Therefore, $\bA(\tilde{\bv}(t))$ is is a (convex) linear combination of $\bA_1, \bA_2, \cdots , \bA_n$. 
\end{proof}
\end{lemma}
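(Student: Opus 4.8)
The plan is to exploit the linearity of $\bA(\cdot)$ directly, pushing it through the convex representation of $\tbv(t)$ so that the convex-combination coefficients are carried over unchanged from the vector side to the matrix side.

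First I would invoke the hypothesis that $\tbv(t)$ sits inside the polytope, expanding it as $\tbv(t) = \sum_{i=1}^n \zeta_i \bu_i$, where the $\bu_i$ are the $n$ vertices and the scalars $\zeta_i$ satisfy $\zeta_i \geq 0$ and $\sum_{i=1}^n \zeta_i = 1$ in the convex case (or are arbitrary reals in the general linear case). This is just the defining property of a point of the polytope from \Cref{def:cc}. Second, applying $\bA$ and using linearity I would obtain $\bA(\tbv(t)) = \bA\bigl(\sum_{i=1}^n \zeta_i \bu_i\bigr) = \sum_{i=1}^n \zeta_i \bA(\bu_i)$, and set $\bA_i := \bA(\bu_i)$. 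The decisive observation is that the scalar coefficients $\zeta_i$ are untouched by $\bA$, so their nonnegativity and sum-to-one property are inherited verbatim; hence $\bA(\tbv(t))$ is the very same convex combination, now of the matrices $\bA_1, \dotsc, \bA_n$ regarded as points of $\mathbb R^{n\times n}$.

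I expect essentially no obstacle here: the statement is an immediate consequence of linearity, and the convex and the general linear cases are handled by the identical computation, the only difference being whether the constraints on the $\zeta_i$ are imposed. The one subtlety worth a remark concerns the phrase \enquote{representing a polytope}: to guarantee that $\CO(\{\bA_1, \dotsc, \bA_n\})$ is a genuine polytope in the sense of \Cref{def:cc} (no proper subset of vertices yielding the same convex hull), one would additionally want the $\bA_i$ to remain affinely independent, which holds whenever $\bA$ is injective on the affine span of the $\bu_i$. For the intended application this is a harmless assumption and does not affect the core claim that the convex-combination structure transfers.
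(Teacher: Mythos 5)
Your proof is correct and follows essentially the same route as the paper's: expand $\tbv(t)$ as $\sum_{i=1}^n \zeta_i \bu_i$, push $\bA$ through by linearity, and observe that the coefficients $\zeta_i$ with their nonnegativity and sum-to-one constraints transfer unchanged to the combination of the matrices $\bA_i := \bA(\bu_i)$. Your closing remark on affine independence of the $\bA_i$ (needed for the image hull to be a genuine polytope in the sense of \Cref{def:cc}) is a valid refinement that the paper's proof leaves implicit, but it does not change the argument.
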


The alternative and standard way of locating an affine LPV representation of parameter dimension
$r$ in a polytope is to
determine the $r$-dimensional bounding box; see e.g., \cite{KwiW08}.
Here, the number of the vertices of the polytope is $2^r$, and thus, increases
exponentially with the reduced dimension $r$.
If one succeeds to identify a bounding polytope of less vortices, one is
confronted with computing the coordinates within this coordinates for which
there is no established method for dimensions beyond $r=3$; cp. the discussion
in \cite{DasH23}. 

In both respects, PAEs offer a promising alternative. Firstly,
the relevant parametrization $\balpha \otimes \brho$ already defines the needed
bounding polytope with a size of $r \cdot k$ which grows linearly in the number of clusters
$k$ and the dimension of the parametrization. Note that the difference to $2^r$
becomes advantageous for moderate and large $r$, meaning that, e.g., $r\cdot k < 2^r$ for $r\geq 5$ and $k\leq r$.
Secondly, the decoder readily provides the coordinates in the consider polytope.

\section{Simulation Results}\label{sec:sim}
As an example application, we consider flow simulations with the \emph{incompressible Navier-Stokes equations}
\begin{subequations}\label{eq:nse}
\begin{align}
\frac{\partial}{\partial t}\mathrm{v} + (\mathrm{v}\cdot\nabla)\mathrm{v}+
\frac{1}{\RE}\Delta \mathrm{v} - \nabla \mathrm{p} &= \mathrm{f} \\
\nabla \cdot \mathrm{v}&=\textbf{0},
\end{align}
\end{subequations}
where $\mathrm{v}$, $\mathrm{p}$, and $\mathrm{f}$ are the velocity, pressure,
and forcing term respectively and where $\RE$ is the Reynolds number.
After an FEM discretization of Equation (\ref{eq:nse}) a semi-discrete model is
obtained as 
\begin{subequations}\label{eq:nse-disc}
\begin{align}
\bM\dot\bv(t) + \bN(\bv(t))\bv(t)+ \bA\bv(t) - \bJ^\top\bp(t) - \bf(t) &= \textbf{0} \\
\bJ\bv(t)&=\textbf{0},
\end{align}
\end{subequations}
where $\bv(t)\in\mathbb{R}^{n}$ and $\bp(t)\in\mathbb{R}^{p}$ are the states of the velocity and the pressure at time $t$ respectively.
and where $\bM\in\mathbb{R}^{n\times n}$, $\bN(\cdot)\in\mathbb{R}^{n\times n}$,
$\bA\in\mathbb{R}^{n\times n}$, $\bJ\in\mathbb{R}^{p\times n}$, and
$\bf(t)\in\mathbb{R}^{n}$ are the mass, convection, diffusion, discrete
divergence matrices, and the forcing term at time $t$ respectively; see the
details for realizing the convection as a state-dependent coefficient in \cite{BehBH17}.

For the presentation that follows, we employ the so-called ODE formulation of
the DAE \eqref{eq:nse-disc} leaving aside
all technical challenges associated with handling the pressure $\bp(t)$ in
numerical schemes; cp. \cite{AltH15}.
The ODE formulation of (\ref{eq:nse-disc}) is derived under the reasonable
assumption that that $\bM$ and $\bJ\bM^{-1}\bJ^\top$ are invertible and the
observation that $\bJ\bv(t)=\textbf{0}$ implies
\begin{equation*}
\bJ\dot\bv(t)=\textbf{0}.
\end{equation*}
Then, by multiplying $\bM^{-1}$ on both sides of Equation (\ref{eq:nse-disc}), we obtain
\begin{equation*}
\dot\bv(t) + \bM^{-1}(\bN(\bv(t))\bv(t)+ \bA\bv(t) - \bJ^\top\bp(t) - \bf(t)) = \textbf{0}
\end{equation*}
Then, by multiplying $\bJ$ on both sides,
\begin{equation*}
\bJ\bM^{-1}(\bN(\bv(t))\bv(t)+ \bA\bv(t) - \bJ^\top\bp(t) - \bf(t)) = \textbf{0}\, (\because \bJ\dot\bv(t)=\textbf{0})
\end{equation*}
Finally, $\bp(t)$ can be described with respect to $\bv(t)$ and $\bf(t)$ as follows:
\begin{equation*}
\bp(t) =\bS^{-1}\bJ\bM^{-1}(\bN(\bv(t))\bv(t)+ \bA\bv(t) - \bf(t))
\end{equation*}
where $\bS=\bJ\bM^{-1}\bJ^\top$.
Thus, we can eliminate the pressure $\bp(t)$ from (\ref{eq:nse-disc}) and the equation can be presented as
\begin{equation}\label{eq:nse-ode}
\bM\dot\bv(t) = \bPi^\top(\bN(\bv(t))\bv(t)+ \bA\bv(t) - \bf(t))
\end{equation}
where $\bPi=\bM^{-1}\bJ^\top\bS^{-1}\bJ-\bI$. All data are generated by Equation (\ref{eq:nse-ode}).

\subsection{Data Acquisition and Performance Measures}\label{sec:measure}

For different setups we perform simulations of the FEM discretized incompressible flow
equations \eqref{eq:nse-disc} over time $t$ starting from the associated \emph{Stokes} steady state
and collect the data for training the autoencoders.
The data points are then given as the \emph{snapshots} of the (discrete)
velocity variable $\bvk{k}=\bv(\tk{k})$ at time instances $\tk{k}$.

Using this data, we compute POD approximations and optimize the neural networks
according to the following performance criteria.

In each reduced dimension of $r$, we evaluate the reconstruction performance of PAEs compared to other methods by measuring the averaged relative error
\begin{equation*}
\frac{1}{T}\sum_{i=1}^T\frac{\parallel
\tilde{\bv}^{(i)}-\bv^{(i)}\parallel_\bM}{\parallel \bv^{(i)}\parallel_\bM},
\end{equation*}
and the averaged relative polytope error $\varepsilon_{p}$ (cp. \Cref{def:pe})
\begin{equation*}
\frac{1}{T}\sum_{i=1}^T\frac{\parallel \bv^{(i)\ast}-\bv^{(i)}\parallel_\bM}{\parallel \bv^{(i)}\parallel_\bM}
\end{equation*}
where $T$ is the number of snapshots.

Additionally, we investigate the trajectories of reconstructed states and latent variables, as well as the polytopes used for the reconstruction.

In view of memory efficiency, we report the number of encoding parameters, decoding parameters, and vertices of polytopic LPV representations.
The number of encoding parameters for CAE and PAE is much less than that of POD.
This reduction in size is attributed to the immutable and sparse interpolation matrix $\bI_C$ and depthwise separable convolutions.
Regarding the number of decoding parameters, for POD and CAE, their decoders are linear, resulting in sizes of $nr$, denoting the number of elements for a $n \times r$  matrix. 
the decoding size of PAE is $nrk+m$ where $m$ is the number of parameters in the clustering net.

As an additional performance characteristics, we report the number $R$ of
vertices of a polytope that contains the reconstruction values as it would be
used for LPV approximations; cp. \Cref{sec:lpvapprox}. 
For the POD approximations, we consider the bounding box with $R=2^r$, where $r$
is the dimension of $\brho$ which is the standard approach in absence of, say,
an algorithm that would compute a polytopic expansion in a general polytope.

For the CAE or the PAE, however, this polytopic expansion is readily given in a
polytope of $R=r$ or $R=kr$ vertices, where $k$ is the number of clusters. Note that due
to summation condition $\sum_{i=1}^r \rho_i(t)=1$, the dimension of the
polytopes for CAE is even one less than what one can expect in the general case.

\subsection*{Data availability}
The source code of the implementations used to compute the presented results is available from 
  \href{https://doi.org/10.5281/zenodo.10491870}{\texttt{doi:10.5281/zenodo.10491870}}
under the Creative Commons Attribution 4.0 international license and is authored by Yongho Kim.

\subsection{Dataset: single cylinder}
Our data are generated in the time domain $[0,16]$. 
We use a Reynolds number of 40 for the single cylinder case.
Each snapshot vector $\bv(t)$ has 42,764 states (i.e., $n=42764$) associated with nodes of the FEM mesh in the spatial domain $(0,5)\times (0,1)\subset \mathbb R^{2}$. 

The dataset is divided into a training set containing 500 snapshots in the time interval $[0,10]$ and a test set including 300 snapshots in $[10,16]$.
When convolutional encoders are employed, the interpolation matrix
$\bI_c\in\mathbb{R}^{42764\times 5922}$ maps $\bv(t)\in\mathbb{R}^{42764}$ into
$\bvcnn(t)\in\mathbb{R}^{2 \times 63\times 47}$ consisting of the
$x$-directional velocity and the $y$-directional velocity values at each mesh
point on a rectangular grid of size $63\times 47$.

\subsection{PAEs: single cylinder}\label{sec:single-pae}
\begin{table}[t]
\begin{adjustbox}{width=200pt,center} 
\begin{tabular}{c|c|c|c|c|c|c}
\textbf{Model} & $r$ & $L_e$ & $P_e$ & $L_d$ & $P_d$ & $R$ \\
\hline
POD & 2 & 1& 85,528 &1& \textbf{85,528}  & 4 \\ 
CAE & 2 &15& \textbf{36,692} &1& \textbf{85,528}  & \textbf{2}\\ 
PAE(k=3) & 2 &15& \textbf{36,692}&3 & 256,584 & 6  \\
\hline
POD & 3 &1& 128,292&1 & \textbf{128,292}  & 8 \\ 
CAE & 3 &15& \textbf{36,725}&1 & \textbf{128,292}  & \textbf{3} \\ 
PAE(k=3) & 3 &15& \textbf{36,725}&3 & 384,876  & 9 \\
\hline
POD & 5 &1& 213,820&1 & \textbf{213,820}  & 32 \\ 
CAE & 5 &15& \textbf{36,791}&1 & \textbf{213,820}  & \textbf{5} \\ %
PAE(k=3) & 5 &15& \textbf{36,791}&3 & 641,460 & 15 \\
\hline
POD & 8 &1& 342,112 &1& \textbf{342,112}  & 256 \\ 
CAE & 8 &15& \textbf{36,890}&1 & \textbf{342,112} & \textbf{8} \\ %
PAE(k=3) & 8 &15& \textbf{36,890}& 3 & 1,026,336  & 24 \\
\end{tabular}
\end{adjustbox}
\caption{Model information: the number of encoding ($L_e$) and decoding ($L_d$) layers, the number of encoding ($P_e$) and decoding ($P_d$) parameters, 
and the number
$R$ which counts the vertices of a bounding box in $\mathbb R^r$ (for POD) or of
the polytopes used for the reconstruction for CAE and PAE for the single cylinder case (\Cref{sec:single-pae});
see how to calculate $P_e$ and $P_d$ in \Cref{sec:measure}.}\label{tab:single-params}
\end{table}

In this simulation, each PAE and CAE has a deep convolutional encoder consisting of 14 convolutional layers and a fully connected layer. 
We use the ELU activation function \cite{Djo16ELU} in the convolutional layers and the modified softmax function (\ref{eq:msoftmax}) in the last layer.
To reduce the number of nodes in the last layer, the global average pooling is used before performing the fully connected computation.
The decoder of POD is a linear combination of $r$ vectors and the decoder of CAE is a convex combination of $r$ vertices. The PAE decoder is partially linear as mentioned in \Cref{sec:decoder}.
In other words, CAE consists of an encoding part
\begin{subequations}
\begin{align*}
\bvcnn(t)=\bI_C\bv(t) \\
\brho(t)=\mu(\bvcnn(t)) 
\end{align*}
\end{subequations}
and a decoding part
\begin{subequations}
\begin{align*}
\tilde \bv(t) &= \varphi(\brho(t))
\end{align*}
\end{subequations}
without clustering. 
Consequently, CAE is regarded as PAE with 1 cluster (i.e., $k=1$).

Table \ref{tab:single-params} presents a comparison of the number of encoding and decoding parameters.
CAE and PAE maintain a relatively consistent number of encoding parameters regardless of reduced dimensions, in contrast with POD.
In practice, when $r=2,3,5,8$, the encoding size of CAE and PAE is only $42.6\%, 28.6\%, 17.2\%$, and $10.8\%$ of the encoding size of POD respectively in terms of the number of encoding parameters.
The decoding size of POD and CAE is decided by a $n \times r$ decoding matrix. 
In contrast, the decoding size of PAE is larger than them due to the Kronecker product of $\balpha$ and $\brho$.

For training CAEs and PAEs, the Adam optimizer is used with a learning rate $\eta$ of $10^{-4}$, a batch size of 64, and a clustering loss weight $\lambda$ of $10^{-4}$; see the details in \nameref{sec:app-train}.

\begin{figure}[t]
\centering
\includegraphics[width=0.8\columnwidth]{./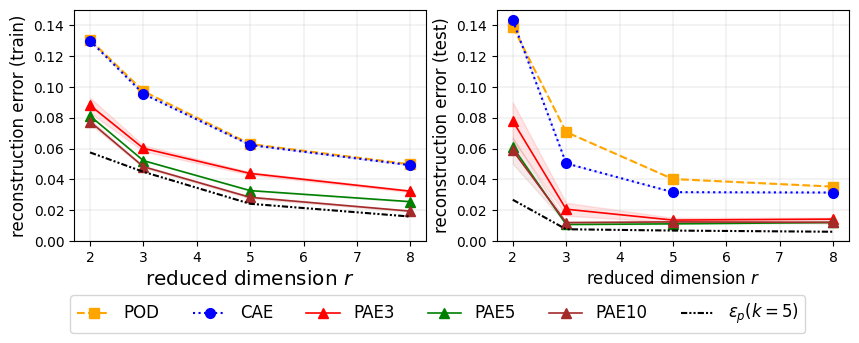}
\caption{Reconstruction error across the reduced dimension $r$ averaged for 5
runs for the single cylinder case (\Cref{sec:single-pae}). The shaded regions
mark the statistical uncertainty measured through several training runs and
appears to be insignificant and, thus, invisible in the plots for most methods.}%
\label{fig:rec-single}
\end{figure}
\subsection{Results: single cylinder}

In this section, we investigate how PAEs reconstruct periodic flows and handle their latent variables in very low-dimensional spaces. \Cref{fig:rec-single} shows the reconstruction errors of POD, CAE, and PAE against the reduced dimension $r$. These errors are calculated using the averaged errors obtained from 5 training trials, resulting in very small standard deviations. The CAEs achieve similar errors to POD for the training data in the time range $[0,10]$ and outperform POD in terms of the test reconstruction errors except for the error at $r=2$. 

It shows that the CAEs reconstruct periodic flows in $[10,16]$ better than POD.
Overall, the reconstruction performance of the CAEs is comparable to POD, although the CAEs utilize fewer model parameters for the reconstruction and determine a smaller $R$ for the design of polytopic LPV systems.
The PAE(k)s that cluster latent variables into $k$ clusters (e.g. PAE3, PAE5, PAE10) outperform the CAEs and POD. 
The reconstruction errors of the PAEs tend to be reduced as $k$ gets larger. 
However, there is no significant gap between the errors of PAE5 and PAE10.

As a result of the polytope errors with $k=5$, they are less than $2.7\%$ for the periodic flows in the testing range $[10,16]$. 
Specifically, these errors depending on the reduced dimensions $r=2,3,5,8$ achieve $2.7\%, 0.8\%, 0.7\%,$ and $0.6\%$ respectively.
It indicates that the polytopes defined by the PAE5 are well-constructed, even when dealing with very low-dimensional latent variables.

\begin{figure}[t]
\centering
\includegraphics[width=0.9\columnwidth]{./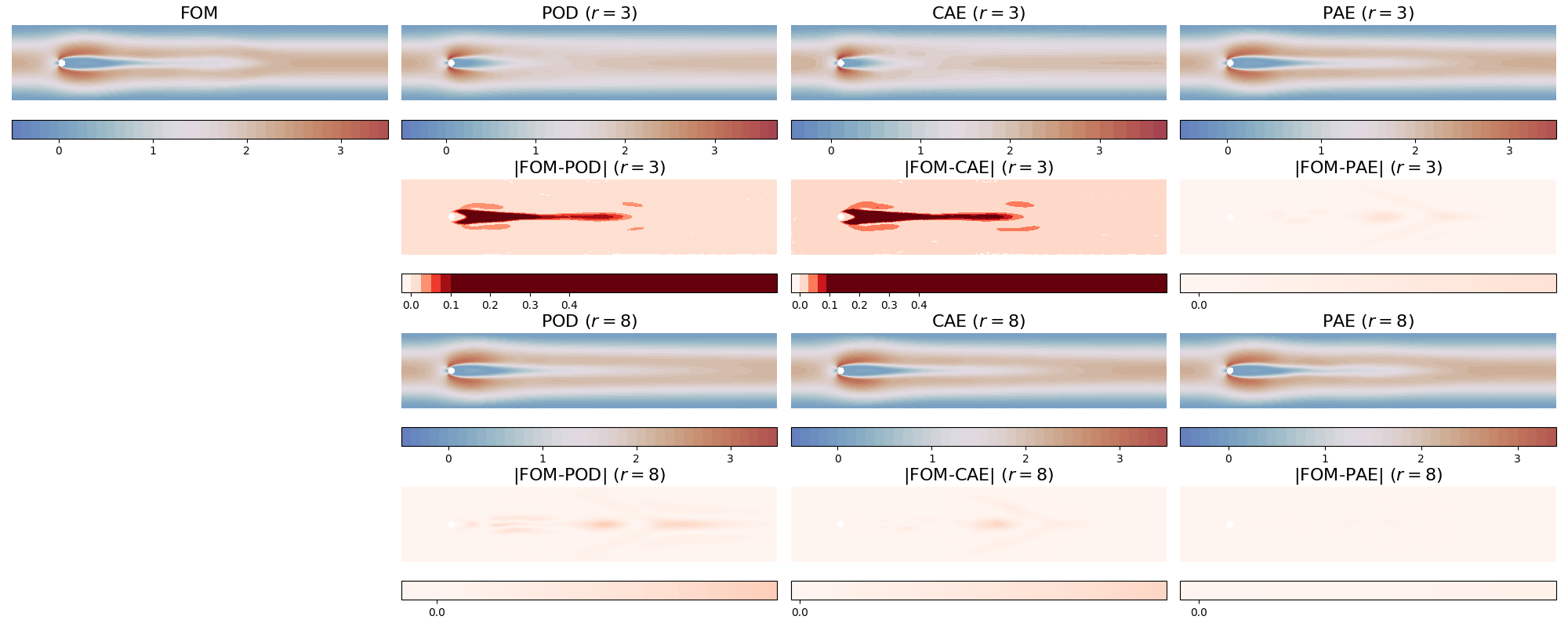}
\caption{Comparison of the reference generated by the full order model (FOM) and the developed snapshots of POD, CAE, and PAE at $t=2.0$: training session for the single cylinder case (\Cref{sec:single-pae}).}
\label{fig:single-snap100}
\end{figure}

\begin{figure}[t]
\centering
\includegraphics[width=0.9\columnwidth]{./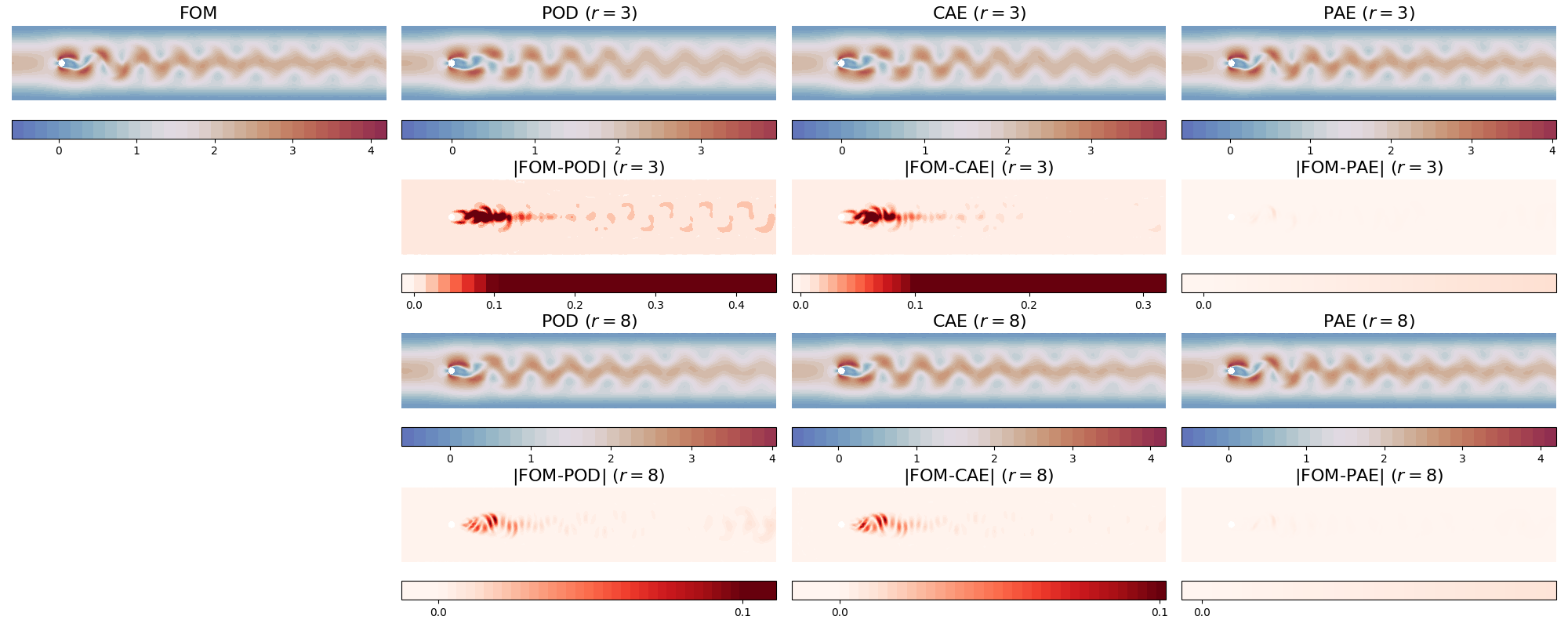}
\caption{Comparison of the reference generated by the full order model (FOM) and the developed snapshots of POD, CAE, and PAE at $t=14.0$: evaluation session for the single cylinder case (\Cref{sec:single-pae}).}%
\label{fig:single-snap700}
\end{figure}

\Cref{fig:single-snap100} and \Cref{fig:single-snap700} show a comparison of the developed snapshots from FOM, POD, CAE, and PAE3 at time $t=2.0, 14.0$ respectively when $r=3,8$.
The figures of their absolute errors show that PAE3 outperforms other models in terms of the state reconstruction with very low-dimensional parametrizations.

\begin{figure}[t]
\centering
\includegraphics[width=0.7\columnwidth]{./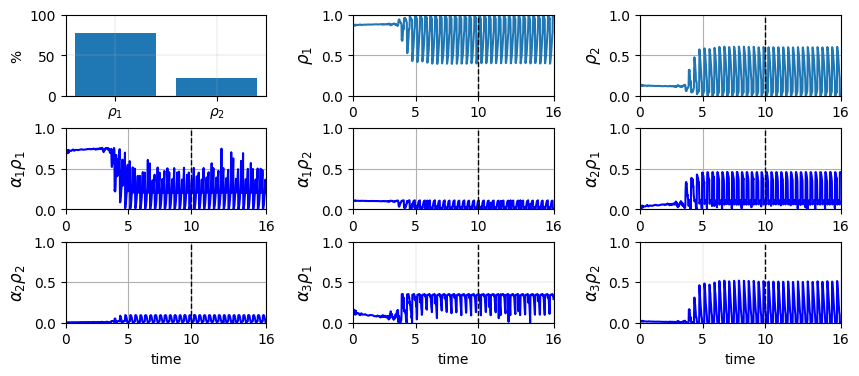}
\caption{Activation rates and trajectories of latent state variables when $r=2$: the dashed line separates the training and the extrapolation phases for the single cylinder case (\Cref{sec:single-pae}).}%
\label{fig:single-rho2}
\end{figure}

\begin{figure}[t]
\centering
\includegraphics[width=0.5\columnwidth]{./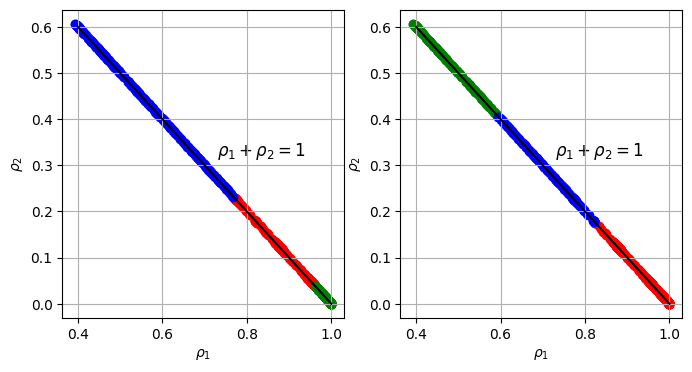}
\caption{Comparison of the results of (left) the smooth clustering $c(\brho)$ and (right)
$k$-means clustering with 3 clusters in the two-dimensional space for the single
cylinder case (\Cref{sec:single-pae}). The different
colors denote different clusters.}%
\label{fig:single-rho2d}
\end{figure}

\begin{figure}[t]
\centering
\includegraphics[width=0.6\columnwidth]{./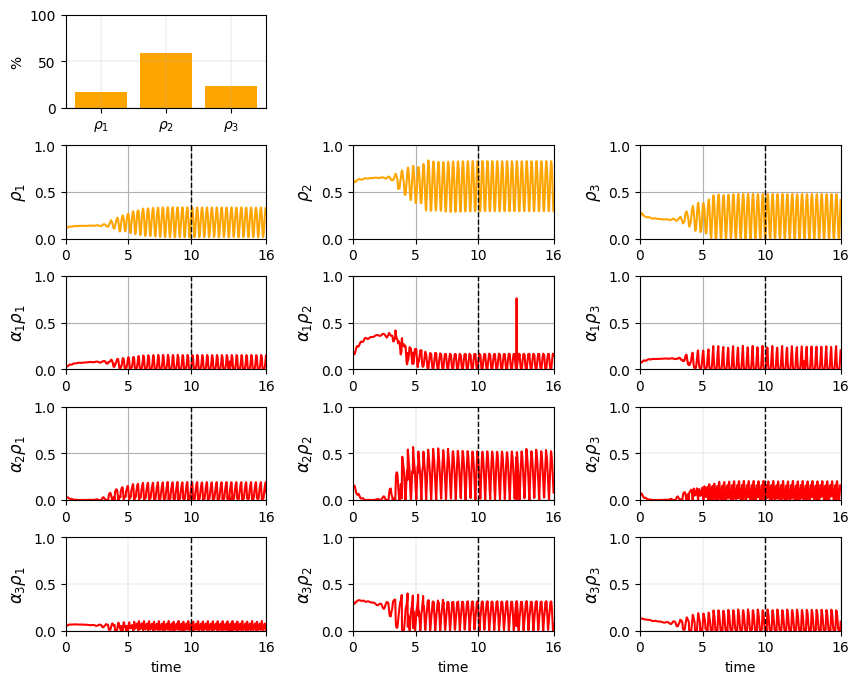}
\caption{Activation rates and trajectories of latent state variables when $r=3$: the dashed line separates the training and the extrapolation phases for the single cylinder case (\Cref{sec:single-pae}).}%
\label{fig:single-rho3}
\end{figure}

\begin{figure}[t]
\centering
\includegraphics[width=0.6\columnwidth]{./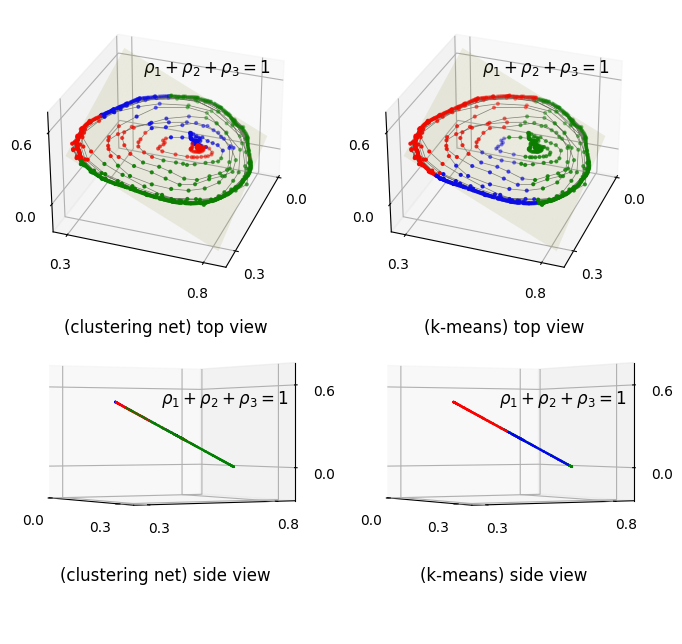}

\caption{Comparison of the results of (left) the smooth clustering $c(\brho)$ and (right)
$k$-means clustering with 3 clusters in the three-dimensional space for the single
cylinder case as described in \Cref{sec:single-pae}. The different
colors denote different clusters.}%
\label{fig:single-rho3d}
\end{figure}

\Cref{fig:single-rho2} displays the activation rates of the vertices for a polytope and the trajectories of latent state variables when $r=2$ with $k=3$ (i.e., PAE3).
It is shown that the latent variables for each PAE are within the range $[0,1]$ as the coefficients of a convex combination.

The activation rate is a metric indicating the relative extent to which each latent variable influences the state reconstruction. 
The activation rate of the $i$-th latent variable is defined as
\begin{equation*}
\frac{\sum_{j=1}^N \rho_{i,j}}{\sum_{l=1}^r \sum_{l=1}^N \rho_{l,j}}
\end{equation*}
where $N$ is the number of snapshots.
Since
\begin{subequations}
\begin{align*}
\frac{\sum_{j=1}^N (\alpha_1\rho_{i,j}+\cdots \alpha_k\rho_{i,j})}{\sum_{l=1}^r \sum_{l=1}^N (\alpha_1\rho_{l,j}+\cdots \alpha_k\rho_{l,j})}&=\frac{\sum_{j=1}^N (\alpha_1+\cdots \alpha_k)\rho_{i,j}}{\sum_{l=1}^r \sum_{l=1}^N (\alpha_1+\cdots \alpha_k)\rho_{l,j}}\\
&=\frac{\sum_{j=1}^N \rho_{i,j}}{\sum_{l=1}^r \sum_{l=1}^N \rho_{l,j}},
\end{align*}
\end{subequations}
the activation rate of the polytope coefficients related to the $i$-th latent variable is identical to the activation rate of the $i$-th latent variable.

As shown in the bar chart, when $r=2$, the state reconstruction overwhelmingly depends on the first latent variable accounting for $77.9\%$.
Consequently, the three vertices weighted by $\rho_1$ of the polytope significantly influence the state reconstruction with a rate of $77.9\%$.

\Cref{fig:single-rho2d} confirms that the latent variables satisfy the
convex combination constraints, $$\rho_1(t)+\rho_2(t)=1 \text{ and }
\rho_1(t),\rho_2(t)\geq 0$$
and the clustering net $c$ classifies latent variables similarly to $k$-means clustering.
In other words, any latent variables lie on the line $$\rho_1(t)+\rho_2(t)=1,\, \forall t>0.$$ when $r=2$.

\Cref{fig:single-rho3} shows the activation rates of the vertices for a polytope and the trajectories of latent state variables when $r=3$ with $k=3$ (i.e., PAE3).
For any $r$, the encoder ensures that all latent variables are nonnegative, and the summation of the elements for each $\brho(t)$ is 1. 
Thus, the trajectories of latent variables are within the expected range of $[0,1]$.
In the bar chart, the activation rates of each latent variable are $17.4\%$, $58.9\%$, and $23.7\%$.
Consequently, the three vertices weighted by $\rho_2(t)$ of the polytope have a large impact on the state reconstruction compared to the others.

\Cref{fig:single-rho3d} displays the distribution of latent variables in a three-dimensional space, comparing labels obtained by the clustering net with those from $k$-means clustering. 
As latent variables represent the coefficients of a polytope, they are nonnegative and lie on the plane $$\rho_1(t)+\rho_2(t)+\rho_3(t)=1,\, \forall t>0.$$

The clustering net tends to assign labels to latent variables in a manner similar to $k$-means clustering, as it utilizes pseudo-labels obtained from $k$-means clustering.
However, the clustering net is less constrained by distances from centroids due to its training with the joint loss function outlined in \nameref{sec:app-train}.

\subsection{Dataset: double cylinder}
The double cylinder setups features rich and chaotic dynamics even for
relatively low Reynolds numbers $\RE \in[40,100]$.
For our experiments we consider the flow at $\RE=60$ and generate training data
as laid out above.
Now, each snapshot vector $\bv(t)$ encompasses 46,014 states (i.e., $n=46014$), corresponding to nodes in the FEM mesh within the spatial domain $(-20,50)\times (-20,20)\subset\mathbb{R}^2$.

The states in the time domain $[0,240]$ 
As the initial value is rather unphysical and appeared to dominate the data in
an unfavourable fashion, we allowed the flow to evolve first and only considered
data points after the time $t=240$. 
Again, the dataset is partitioned into a training set, comprising 720 snapshots in the time interval $[240,480]$, and a test set, encompassing 432 snapshots within $[480, 760]$.

When employing convolutional encoders, the interpolation matrix $\bI_C\in\mathbb{R}^{46014\times 7505}$ transforms $\bv(t)\in\mathbb{R}^{42764}$ into $\bvcnn(t)\in\mathbb{R}^{2 \times 95 \times 79}$, representing the x-directional velocity and y-directional velocity at each mesh point on the rectangular grid of size  $95 \times 79$.

\subsection{PAEs: double cylinder}\label{sec:double-pae} 
\begin{table}[t]
\begin{adjustbox}{width=200pt,center} 
\begin{tabular}{c|c|c|c|c|c|c}
\textbf{Model} & $r$ & $L_e$ & $P_e$ & $L_d$ & $P_d$ & $R$ \\
\hline
POD & 2 & 1& 92,028  & 1& \textbf{92,028} & 4 \\ 
CAE & 2 & 27& \textbf{91,410} &1& \textbf{92,028}  & \textbf{2}  \\ 
PAE($k=3$) & 2 &27& \textbf{91,410} &3& 460,140 & 6\\
\hline
POD & 3 & 1& 138,042 &1& \textbf{138,042}  & 8 \\
CAE & 3 &27& \textbf{91,443} &1& \textbf{138,042}  & \textbf{3} \\
PAE($k=3$) & 3 &27& \textbf{91,443} &3& 690,266 & 9  \\
\hline
POD & 5 & 1& 230,070 &1& \textbf{230,070}  & 32 \\ 
CAE & 5 &27& \textbf{91,509} &1& \textbf{230,070}  & \textbf{5} \\ %
PAE($k=3$) & 5 &27& \textbf{91,509} &3& 690,210  & 15 \\
\hline
POD & 8 & 1& 368,112 &1& \textbf{368,112} & 256 \\ 
CAE & 8 &27& \textbf{91,608} &1& \textbf{368,112}  & \textbf{8} \\ %
PAE($k=3$) & 8 &27& \textbf{91,608} &3& 1,104,336 & 24 \\
\end{tabular}
\end{adjustbox}
\caption{Model information: the number of encoding ($L_e$) and decoding ($L_d$) layers, the number of encoding ($P_e$) and decoding ($P_d$) parameters, 
and the number
$R$ which counts the vertices of a bounding box in $\mathbb R^r$ (for POD) or of
the polytopes used for the reconstruction for CAE and PAE for the double cylinder case (\Cref{sec:double-pae});
see how to calculate $P_e$ and $P_d$ in \Cref{sec:measure}.}
\label{tab:double-params}
\end{table}

In this simulation, each PAE and CAE has a deep convolutional encoder consisting of 26 convolutional layers and a fully connected layer. 
The ELU activation function is applied to the convolutional layers and the modified softmax function (\ref{eq:msoftmax}) in the last layer.
To reduce the number of nodes in the last layer, the global average pooling is used before performing the fully connected computation.

In \Cref{tab:double-params} we tabulate the different model parameters.
Apparently, CAE and PAE maintain a relatively consistent number of encoding parameters regardless of reduced dimensions, in contrast with POD.
In practice, when $r=2,3,5,8$, the encoding size of CAE and PAE is only $99.3\%, 66.2\%, 39.8\%$, and $24.9\%$ of the encoding size of POD respectively.

According to their decoding sizes, for POD and CAE, their decoders are linear, resulting in sizes of $nr$, denoting the number of elements for a $n \times r$  matrix. 
the decoding size of PAE is $nrk+m$ where $m$ is the number of parameters in the clustering net.
In comparison with POD, CAE and PAE allow for the definition of a small number of $R$ vertices for polytopic LPV representations.

For training CAEs and PAEs, the Adam optimizer is used with a learning rate $\eta$ of $10^{-4}$, 
a batch size of 64, and a clustering loss weight $\lambda$ of $10^{-4}$; see \nameref{sec:app-train}.

\subsection{Results: double cylinder}
\begin{figure}[t]
\centering
\includegraphics[width=0.8\columnwidth]{./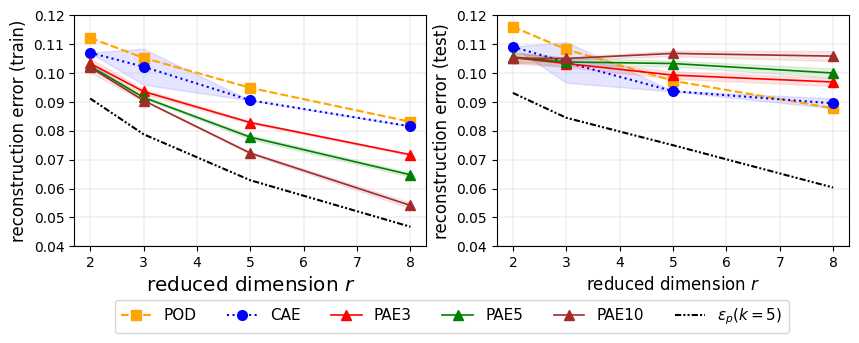}
\caption{Reconstruction error across the reduced dimension $r$ averaged for 5
runs for the double cylinder case (\Cref{sec:double-pae}). The shaded regions
mark the statistical uncertainty measured through several training runs and
appears to be insignificant and, thus, invisible in the plots for most methods.}
\label{fig:rec-double}
\end{figure}

\begin{figure}[t]
\centering
\includegraphics[width=0.9\columnwidth]{./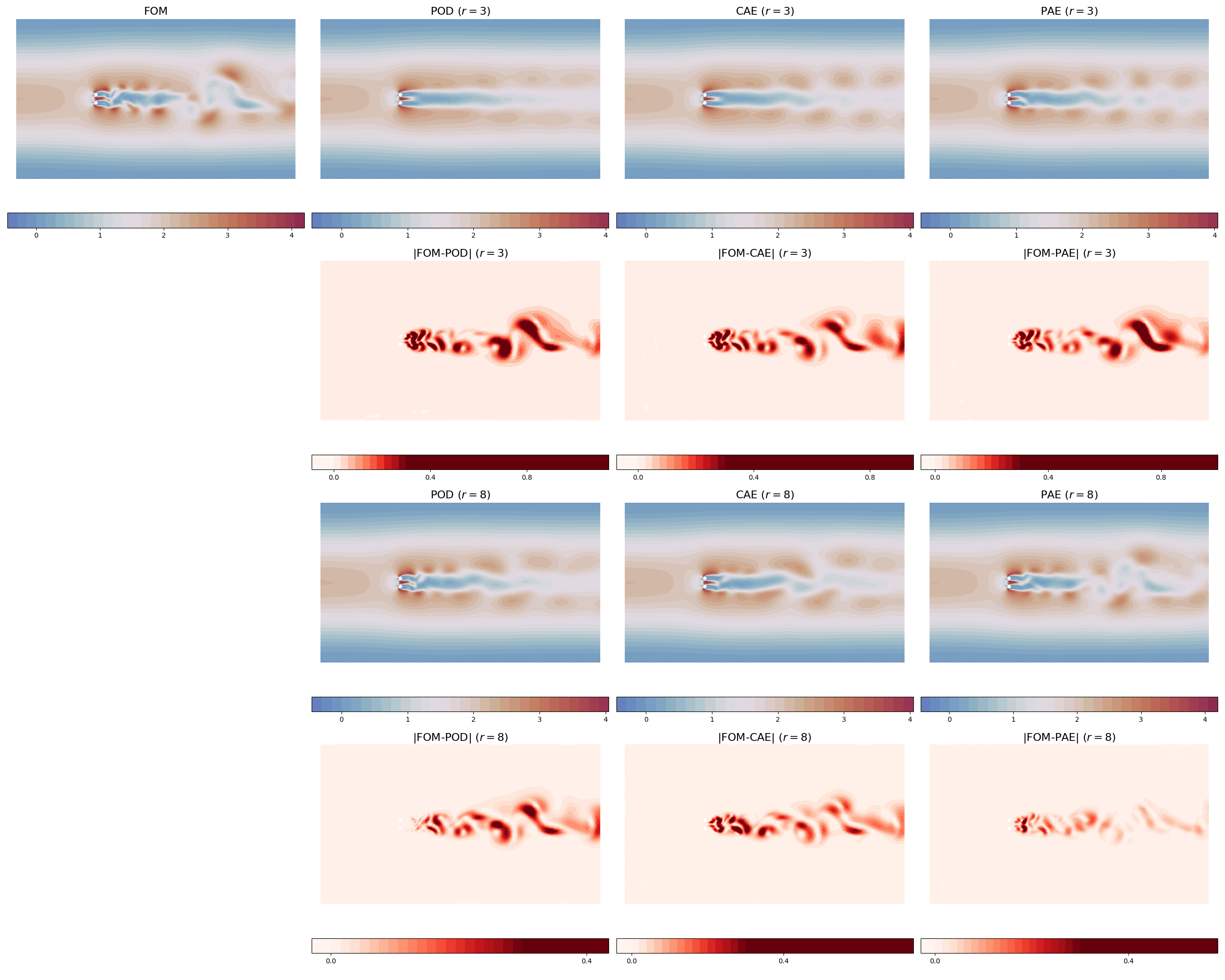}
\caption{Comparison of the reference generated by the full order model (FOM) and the developed snapshots of POD, CAE, and PAE at $t=556.0$: training session for the double cylinder case (\Cref{sec:double-pae}).}%
\label{fig:double-snap700}
\end{figure}

\begin{figure}[t]
\centering
\includegraphics[width=0.9\columnwidth]{./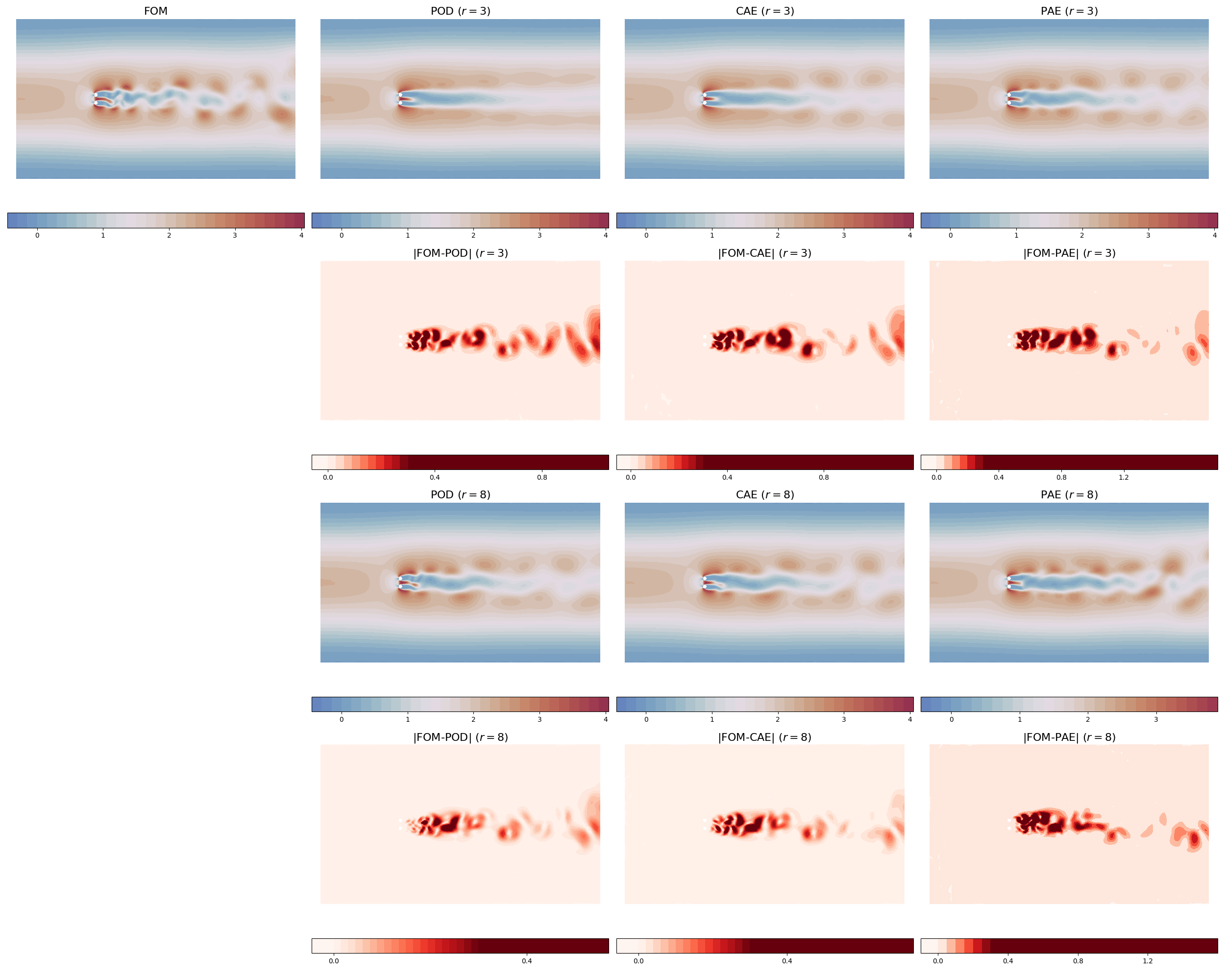}
\caption{Comparison of the reference generated by the full order model (FOM) and the developed snapshots of POD, CAE, and PAE at $t=736.5$: evaluation session for the double cylinder case (\Cref{sec:double-pae}).}%
\label{fig:double-snap1100}
\end{figure}

In \Cref{fig:rec-double}, PAE demonstrates superiority over POD and CAE across reduced dimensions in terms of the reconstruction performance for the training data. The reconstruction errors of PAE tend to significantly decrease as $k$ gets larger and $r$ increases.

However, PAEs fit the training data so well that the models lack generalization. 
Despite efforts to address overfitting through various regularization techniques including $L_1$ \emph{regularization}, $L_2$ \emph{regularization}, \emph{Dropout} \cite{NiGe14}, \emph{label smoothing} \cite{ChVi16}, and the application of fewer model parameters, the issue persists.

Nevertheless, the polytope errors $\varepsilon_p$ with $k=5$ reach a certain level in both the training and test phases.
It indicates that each polytope defined by PAEs is well-constructed.
Therefore, there is a potential to improve the model by tuning the encoding and the clustering parts involved in convex combination coefficients.

\Cref{fig:double-snap700} and \Cref{fig:double-snap1100} display a comparison of the developed snapshots from FOM, POD, CAE, and PAE3 at time $t=556.0, 736.5$ respectively when $r=3, 8$. 
At $t=556.0$, PAE3 reconstructs the state more clearly than the others.
In the reconstructed snapshot obtained from POD with $r=3$ at time $t=736.5$, the cylinder wake exhibits a notably smoother flow compared to CAE and PAE3. When $r=8$, there is no discernible difference among the applied methods.

\begin{figure}[t]
\centering
\includegraphics[width=0.6\columnwidth]{./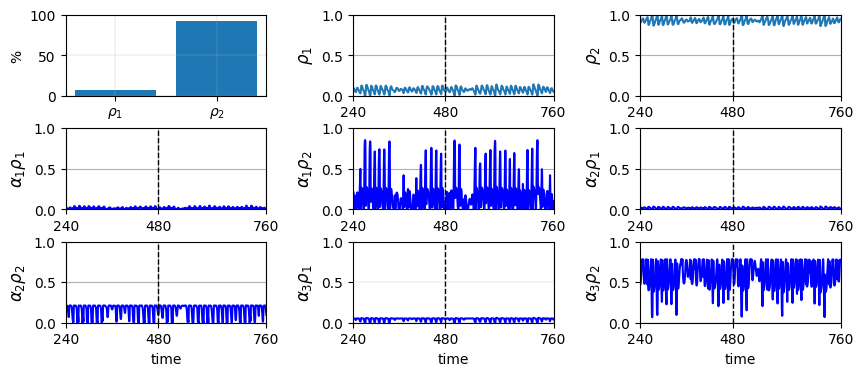}
\caption{Activation rates and trajectories of latent state variables when $r=2$: the dashed line separates the training and the extrapolation phases for the double cylinder case (\Cref{sec:double-pae}).}%
\label{fig:double-rho2}
\end{figure}

\begin{figure}[t]
\centering
\includegraphics[width=0.5\columnwidth]{./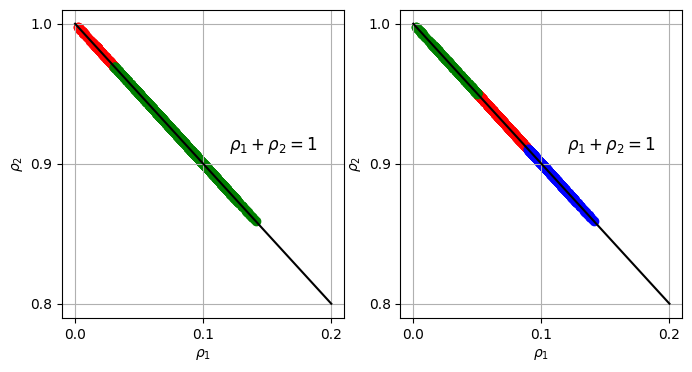}
\caption{Comparison of the results of (left) the smooth clustering $c(\brho)$ and (right)
$k$-means clustering with 3 clusters in the two-dimensional space for the double
cylinder case (\Cref{sec:double-pae}). The different
colors denote different clusters.}%
\label{fig:double-rho2d}
\end{figure}

In \Cref{fig:double-rho2}, it is evident that $\rho_2(t)$ exerts a predominant influence on state reconstruction, with an activation rate of $92.8\%$. 
\Cref{fig:double-rho2d} visually represents the latent variables satisfying convex combination constraints.
Moreover, the clustering net $c$ classifies latent variables in a manner reminiscent of $k$-means clustering. 
However, the clustering net assigns only two labels to the latent variables, in contrast to $k$-means clustering.

\begin{figure}[t]
\centering
\includegraphics[width=0.6\columnwidth]{./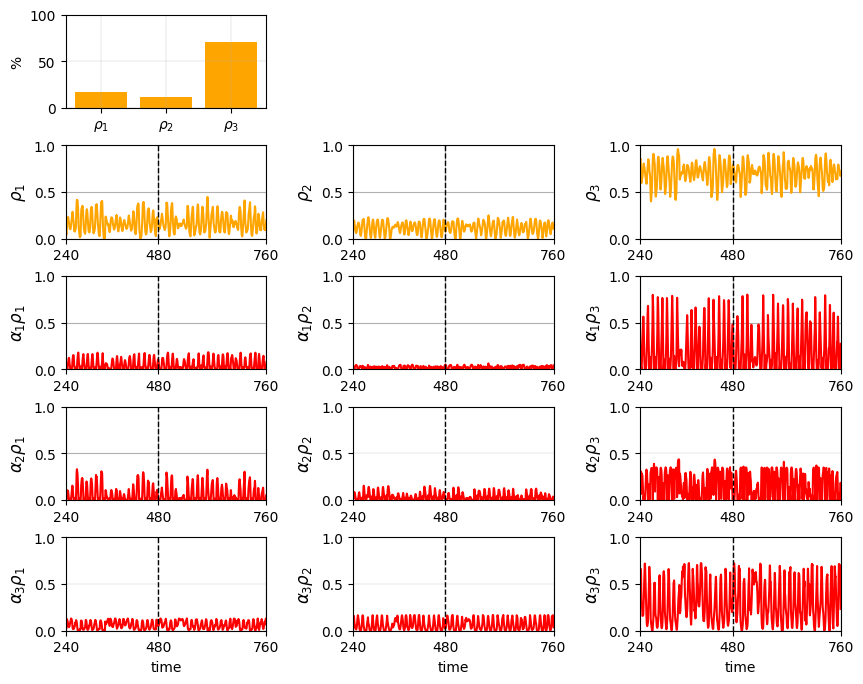}
\caption{Activation rates and trajectories of latent state variables when $r=3$: the dashed line separates the training and the extrapolation phases for the double cylinder case (\Cref{sec:double-pae}).}%
\label{fig:double-rho3}
\end{figure}

\begin{figure}[t]
\centering
\includegraphics[width=0.6\columnwidth]{./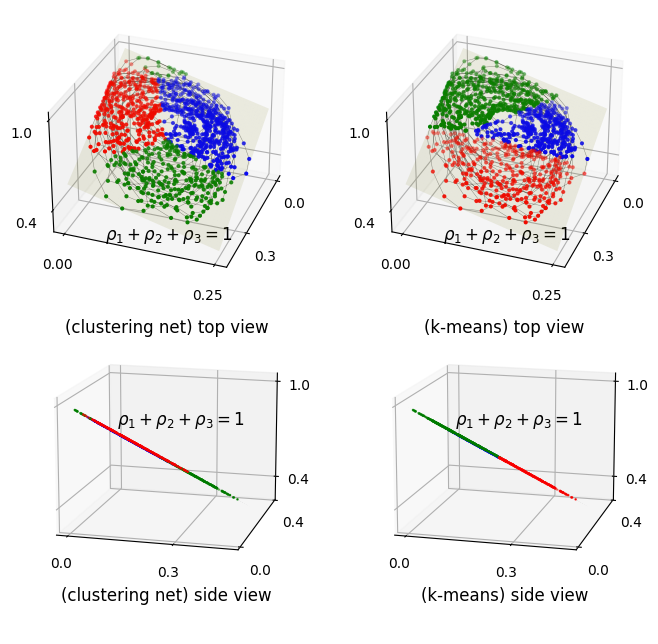}
\caption{Comparison of the results of (left) the smooth clustering $c(\brho)$ and (right)
$k$-means clustering with 3 clusters in the three-dimensional space for the double
cylinder case (\Cref{sec:double-pae}). The different
colors denote different clusters.}%
\label{fig:double-rho3d}
\end{figure}

In \Cref{fig:double-rho3}, the activation rates are $17.6\%$, $11.8\%$, and $70.6\%$ respectively.
As a result, the substantial influence on state reconstruction comes from the three vertices that are weighted by $\rho_3(t)$ in the polytope surpassing the impact of others.

\Cref{fig:double-rho3d} shows the trajectory of the latent variables on the plane $$\rho_1(t)+\rho_2(t)+\rho_3(t)=1,\, \forall t>0.$$
Overall, The clustering net tends to classify latent variables in a manner similar to $k$-means clustering.
Nevertheless, in certain instances, the clustering net assigns a label to them in defiance of distance-based methods.

\clearpage
\section{Conclusion}\label{sec:concl}
In this article, we proposed a polytopic autoencoder architecture consisting of a lightweight nonlinear encoder, a convex combination decoder, and a differentiable clustering network.
We also showed how a differentiable clustering network embedded in the decoder improves state reconstruction errors. 
Notably, the model ensures that all reconstructed states reside within a polytope, 
with their latent variables serving directly as the convex combination coefficients.

To estimate the optimal performance of polytopes obtained by the proposed model, we measured polytope errors.
From another perspective, we investigated the dominance of vertices in state reconstruction within a polytope by utilizing the activation rates of latent variables.

The results of the single cylinder case indicated that our model well
outperforms POD in terms of reconstruction.

For the more challenging dynamics of the double cylinder, despite a rather low
error in the training regime and a rather low potential model mismatch as
estimated by the approximation error in the polytope, the reconstruction
deteriorated in the extrapolation regime.
Nevertheless, we confirmed that the polytope is well-constructed in all the regimes.
Thus, a potential avenue for further research is to enable this potential by 
 revising the architecture or training strategy e.g. by including residuals in
 the loss functions or by extending the networks to better handle unseen
 clusters.
 
\section*{Acknowledgments}%
\addcontentsline{toc}{section}{Acknowledgments}
We acknowledge funding by the German Research Foundation (DFG) through the research training group 2297 ``MathCoRe'', Magdeburg.

\bibliographystyle{abbrvurl}
\bibliography{pae}

\section*{Appendix A}\label{sec:app-cv}
The well-posedness of the polytope error (Definition \ref{def:pe}) can be derived
from the following three lemmas:
\begin{lemma}\label{lem:ineqm}
Let $\bA$ be a positive definite matrix. Then 
\begin{equation*}
\bx^\top\bA\by\leq\sqrt{\bx^\top\bA\bx}\sqrt{\by^\top\bA\by}
\end{equation*}
for any nonzero vectors $\bx$ and $\by$.
\end{lemma}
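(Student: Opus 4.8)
The inequality is precisely the Cauchy--Schwarz inequality for the inner product $\langle \bx,\by\rangle_\bA := \bx^\top\bA\by$ that the (symmetric) positive definite matrix $\bA$ induces, so I would reproduce the classical discriminant argument adapted to this weighted inner product. The plan is to reduce everything to the nonnegativity of a single quadratic form and then read off the inequality from the sign of its discriminant; taking square roots at the end is the only genuinely ``algebraic'' step.

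Concretely, I would fix the nonzero vectors $\bx$ and $\by$ and consider the scalar polynomial
\begin{equation*}
  g(t) = (\bx + t\by)^\top\bA(\bx + t\by), \qquad t\in\mathbb R,
\end{equation*}
which satisfies $g(t)\ge 0$ for every $t$ by positive definiteness of $\bA$. Using the symmetry of $\bA$ — so that $\by^\top\bA\bx = \bx^\top\bA\by$ — this expands to
\begin{equation*}
  g(t) = (\by^\top\bA\by)\,t^2 + 2(\bx^\top\bA\by)\,t + \bx^\top\bA\bx .
\end{equation*}
Since the leading coefficient $\by^\top\bA\by$ is strictly positive and $g$ never takes negative values, the quadratic has at most one real root, hence its discriminant is nonpositive:
\begin{equation*}
  4(\bx^\top\bA\by)^2 - 4(\bx^\top\bA\bx)(\by^\top\bA\by)\le 0 .
\end{equation*}
Rearranging gives $(\bx^\top\bA\by)^2 \le (\bx^\top\bA\bx)(\by^\top\bA\by)$, and because both factors on the right are strictly positive I may take square roots to obtain $|\bx^\top\bA\by| \le \sqrt{\bx^\top\bA\bx}\,\sqrt{\by^\top\bA\by}$, which in particular yields the stated one-sided bound since $\bx^\top\bA\by \le |\bx^\top\bA\by|$.

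An equally clean alternative, which I would mention, is to factor $\bA = \bA^{1/2}\bA^{1/2}$ through its symmetric positive definite square root and set $\tilde{\mathbf x} = \bA^{1/2}\bx$, $\tilde{\mathbf y} = \bA^{1/2}\by$; then $\bx^\top\bA\by = \tilde{\mathbf x}^\top\tilde{\mathbf y}$, $\bx^\top\bA\bx = \|\tilde{\mathbf x}\|^2$, $\by^\top\bA\by = \|\tilde{\mathbf y}\|^2$, and the claim becomes the ordinary Euclidean Cauchy--Schwarz inequality. I do not expect a substantive obstacle here; the only point requiring care is that both arguments silently use $\bA=\bA^\top$ (symmetry enters in collapsing the cross terms and in the existence of a symmetric square root). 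This is harmless for the paper, since the relevant $\bA$ is the FEM mass matrix $\bM$, which is symmetric positive definite, but it is worth flagging that the statement is the natural one only under this symmetry assumption.
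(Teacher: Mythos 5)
Your proposal is correct and takes essentially the same route as the paper: the paper's proof is exactly the classical discriminant argument applied to the nonnegative quadratic $a \mapsto (\bx - a\by)^\top\bA(\bx - a\by)$, matching your $g(t)$ up to the sign of the parameter. Your observation that symmetry of $\bA$ is silently used (to collapse the cross terms, and for the $\bA^{1/2}$ alternative) is a fair point the paper glosses over, though harmless since the relevant matrix is the symmetric mass matrix $\bM$.
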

\begin{proof}
Since $\bA$ is positive definite, we obtain the inequality
\begin{subequations}
\begin{align*}
(\bx-a\by)^\top\bA(\bx-a\by)&\geq0\\
\rightarrow \bx^\top\bA\bx-2a\bx^\top\bA\by+a^2\by^\top\bA\by&\geq0\\
\end{align*}
\end{subequations}
where $a$ is a scalar. Let
\begin{equation*}
p(a)= (\by^\top\bA\by)a^2-2(\bx^\top\bA\by)a+(\bx^\top\bA\bx).
\end{equation*}
Then we get $p(a)\geq0$. Hence, the discriminant of $p(a)$ is less than 0 as follows:
\begin{subequations}
\begin{align*}
(\bx^\top\bA\by)^2-(\bx^\top\bA\bx)(\by^\top\bA\by)\leq0\\
\rightarrow(\bx^\top\bA\by)^2\leq(\bx^\top\bA\bx)(\by^\top\bA\by)
\end{align*}
\end{subequations}
\begin{equation*}
\rightarrow -\sqrt{\bx^\top\bA\bx}\sqrt{\by^\top\bA\by}\leq\bx^\top\bA\by\leq\sqrt{\bx^\top\bA\bx}\sqrt{\by^\top\bA\by}
\end{equation*}
Thus,
\begin{equation*}
\bx^\top\bA\by\leq\sqrt{\bx^\top\bA\bx}\sqrt{\by^\top\bA\by}.
\end{equation*}
When $\by\neq k\bx$ with a scalar $k$,
\begin{equation*}
\bx^\top\bA\by< \sqrt{\bx^\top\bA\bx}\sqrt{\by^\top\bA\by}.
\end{equation*}
\end{proof}

\begin{lemma}\label{lem:depinpoly}
Let $\bx$ and $\by$ be vectors in a convex polytope $\tilde{\mathcal{V}}$. If $\bx\neq\by$ then $\by\neq k\bx$ with a scalar $k$.
\end{lemma}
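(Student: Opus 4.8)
The plan is to argue by contradiction, exploiting the standing assumption recorded in the Remark following \Cref{def:cc} that the vertices generating a convex polytope are linearly independent. Suppose, contrary to the claim, that there is a scalar $k$ with $\by = k\bx$ while $\bx \neq \by$. Writing $\tilde{\mathcal{V}} = \CO(\mathcal{U})$ with $\mathcal{U} = \{\bu_1,\dotsc,\bu_n\}$ linearly independent, both points admit convex representations
\begin{equation*}
\bx = \sum_{i=1}^n \lambda_i \bu_i, \qquad \by = \sum_{i=1}^n \mu_i \bu_i,
\end{equation*}
with $\lambda_i,\mu_i \geq 0$ and $\sum_{i=1}^n \lambda_i = \sum_{i=1}^n \mu_i = 1$.

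First I would substitute these expansions into $\by = k\bx$, obtaining $\sum_i \mu_i \bu_i = \sum_i (k\lambda_i)\bu_i$. Because the $\bu_i$ are linearly independent, equality of the two linear combinations forces the coefficients to coincide termwise, i.e. $\mu_i = k\lambda_i$ for every $i$. This coefficient-matching is where linear independence does the essential work: without it a point of the polytope could have several representations and the deduction would fail.

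Next I would sum the relations $\mu_i = k\lambda_i$ over $i$ and invoke the two normalization constraints, giving $1 = \sum_i \mu_i = k\sum_i \lambda_i = k$, so that necessarily $k = 1$. Substituting $k=1$ back yields $\mu_i = \lambda_i$ for all $i$, hence $\by = \bx$, contradicting $\bx \neq \by$. Therefore no such scalar $k$ can exist, which is precisely the assertion $\by \neq k\bx$.

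I expect no serious obstacle here; the entire argument rests on the uniqueness of the convex representation granted by linear independence of the vertices, combined with the fact that convex coefficients sum to one. The only point that merits care is keeping both normalization conditions in play, since it is exactly the equality $\sum_i\lambda_i = \sum_i\mu_i = 1$ that pins the scalar $k$ to $1$ and collapses the purported scaling relation back to $\bx = \by$.
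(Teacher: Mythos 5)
Your proof is correct and takes essentially the same route as the paper's: assume $\by=k\bx$, expand in convex coordinates over the vertices, and use the normalization of the coefficients to force $k=1$ and hence $\bx=\by$. If anything, your version is slightly more careful than the paper's, since you make explicit the coefficient-matching step justified by linear independence of the vertices (the standing assumption in the remark after Definition~\ref{def:cc}), which the paper invokes only implicitly when it concludes $k\sum_i c_i=1$ from $\by\in\tilde{\mathcal{V}}$.
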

\begin{proof}
We deal with the contrapositive of the above statement.
Suppose that $\by=k\bx$.
Then $\exists$ $c_i\geq 0$, $\forall i$, and $\sum_i c_i=1$ such that
\begin{equation*}
\bx=\sum_i c_i\bu_i\in\tilde{\mathcal{V}},\, \by=k\sum_i c_i\bu_i
\end{equation*}
where $\bu_i$ is the $i$-th vertex of $\tilde{\mathcal{V}}$.
Since $\by\in\tilde{\mathcal{V}}$, 
\begin{equation*}
k\sum_i c_i=1\, (\text{i.e. } k=1).
\end{equation*}
Thus, $\bx=\by$.
\end{proof}

\begin{lemma}\label{lem:dist}
Let $\tilde{\mathcal{V}}\subset \mathbb{R}^n$ be a convex polytope. Then for any $\bv\in\mathbb R^{n}\setminus  \tilde{\mathcal{V}}$ there exists a unique $\bv^\ast \in \tilde{\mathcal{V}}$ such that
  \begin{equation*}
    \|\bv-\bv^\ast \|_\bM = \min\{\|\bv-\tilde{\bv}\|_\bM\colon \tilde{\bv} \in \tilde{\mathcal{V}}\}.
  \end{equation*}
\end{lemma}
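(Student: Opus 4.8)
The plan is to treat this as the classical nearest-point (projection) problem onto a closed convex set, specialized to the polytope $\tilde{\mathcal V}=\CO(\mathcal U)$ and the inner product induced by the positive definite matrix $\bM$, and to harvest the two preparatory lemmas for the uniqueness part. The existence half is a compactness argument, and the uniqueness half is where strict convexity of the $\bM$-norm does the work.

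First I would establish existence. Since $\tilde{\mathcal V}$ is the convex hull of the finitely many vertices $\bu_1,\dots,\bu_N$, it is bounded and closed, hence compact. The map $\bw\mapsto\|\bv-\bw\|_\bM$ is continuous, so by the extreme value theorem it attains its infimum on $\tilde{\mathcal V}$; call a minimizer $\bv^\ast$ and set $d:=\|\bv-\bv^\ast\|_\bM=\dist_\bM(\bv,\tilde{\mathcal V})$. Because $\bv\notin\tilde{\mathcal V}$ we have $d>0$.

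For uniqueness I would argue by contradiction. Suppose $\bv_1^\ast\neq\bv_2^\ast$ both realize the minimum $d$. By convexity of $\tilde{\mathcal V}$ the midpoint $\bw:=\tfrac12(\bv_1^\ast+\bv_2^\ast)$ again lies in $\tilde{\mathcal V}$. Writing $\bx:=\bv-\bv_1^\ast$ and $\by:=\bv-\bv_2^\ast$, so that $\|\bx\|_\bM=\|\by\|_\bM=d$ and $\bv-\bw=\tfrac12(\bx+\by)$, I would expand in the $\bM$-inner product:
\begin{equation*}
\|\bv-\bw\|_\bM^2=\tfrac14\bigl(\bx^\top\bM\bx+2\,\bx^\top\bM\by+\by^\top\bM\by\bigr)=\tfrac12 d^2+\tfrac12\,\bx^\top\bM\by.
\end{equation*}
Applying the strict Cauchy--Schwarz inequality of Lemma~\ref{lem:ineqm} gives $\bx^\top\bM\by<\|\bx\|_\bM\|\by\|_\bM=d^2$, whence $\|\bv-\bw\|_\bM<d$ with $\bw\in\tilde{\mathcal V}$, contradicting the minimality of $d$. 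Therefore $\bv_1^\ast=\bv_2^\ast$.

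The crux --- and the step I expect to be the main obstacle --- is justifying the \emph{strict} inequality, i.e.\ verifying the hypothesis of Lemma~\ref{lem:ineqm} that $\bx$ and $\by$ are not scalar multiples of one another. Since $\bv\notin\tilde{\mathcal V}$ both $\bx$ and $\by$ are nonzero, and if $\by=k\bx$ then the equal $\bM$-norms $\|\bx\|_\bM=\|\by\|_\bM=d$ force $|k|=1$: the case $k=1$ gives $\bv_1^\ast=\bv_2^\ast$, while $k=-1$ gives $\bv=\tfrac12(\bv_1^\ast+\bv_2^\ast)\in\tilde{\mathcal V}$, each contradicting our standing assumptions. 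This collinearity bookkeeping is precisely what Lemma~\ref{lem:depinpoly} is designed to supply, and it is the delicate point one must not skip, since the $k=-1$ branch is exactly where the condition $\bv\notin\tilde{\mathcal V}$ (as opposed to $\bv\in\tilde{\mathcal V}$, where the statement would be trivial) becomes indispensable.
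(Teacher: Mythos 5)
Your proof is correct, and while the existence half (compactness plus continuity) coincides with the paper's, the uniqueness half takes a genuinely different route. The paper proves that $f(\bx)=\|\bx\|_\bM$ is \emph{strictly convex} on $\tilde{\mathcal V}$ --- combining the strict Cauchy--Schwarz inequality of Lemma~\ref{lem:ineqm} with Lemma~\ref{lem:depinpoly} to exclude collinearity of distinct points \emph{of the polytope} --- then derives uniqueness by essentially the same convex-combination contradiction you use, and finally transfers the conclusion to $\bx\mapsto\|\bv-\bx\|_\bM$ via a one-line ``parallel translation'' remark. You instead run the classical projection-theorem argument directly on the residual vectors $\bx=\bv-\bv_1^\ast$ and $\by=\bv-\bv_2^\ast$ and dispose of collinearity by hand: equal norms force $|k|=1$, the case $k=1$ collapses the two minimizers, and $k=-1$ is excluded exactly by $\bv\notin\tilde{\mathcal V}$. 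This buys real robustness. Lemma~\ref{lem:depinpoly} is sound only under the standing assumption (made in the remark after Definition~\ref{def:cc}) that the vertices are linearly independent, since its proof tacitly needs uniqueness of the coefficient representation: a segment from $\bu$ to $2\bu$ contains distinct collinear points, and $\|\cdot\|_\bM$ is merely affine, not strictly convex, along rays through the origin. Worse, the translated set $\bv-\tilde{\mathcal V}$ need not inherit linearly independent vertices, so the paper's closing translation step is not actually covered by its own strict-convexity argument, whereas your version applies strict Cauchy--Schwarz only to the two residuals, where strictness is secured by the hypotheses of the lemma itself, and therefore works for an arbitrary compact convex set. One small correction to your commentary: the collinearity bookkeeping you carry out is \emph{not} what Lemma~\ref{lem:depinpoly} supplies --- that lemma concerns points inside the polytope, not residuals of the form $\bv-\bv_i^\ast$ --- and your proof neither uses nor needs it, which is precisely its advantage.
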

\begin{proof}
Define
\begin{equation*}
f(\bx)=\|\bx\|_\bM, \bx\in\tilde{\mathcal{V}}.
\end{equation*}
We first consider the existence of $\bv^\ast$. Since $\tilde{\mathcal{V}}$ are compact and $f$ is continuous, $\exists \bv^\ast\in\tilde{\mathcal{V}}$ such that 
\begin{equation*}
f( \bv^\ast)\leq f(\bx), \forall \bx\in\tilde{\mathcal{V}}.
\end{equation*}
To show the uniqueness of $\bv^\ast$, we first prove that $f$ is strictly convex.\\
\begin{subequations}
\begin{align*}
f(\lambda\bx+(1-\lambda)\by) =& \sqrt{(\lambda\bx+(1-\lambda)\by)^\top\bM(\lambda\bx+(1-\lambda)\by)}\\
=& \sqrt{(\lambda^2\{f(\bx)\}^2+2\lambda(1-\lambda)\bx^\top\bM\by+(1-\lambda)^2f(\by)\}^2}\\
<& \sqrt{(\lambda^2\{f(\bx)\}^2+2\lambda(1-\lambda)f(\bx)f(\by)+(1-\lambda)^2\{f(\by)\}^2}\, (\because \text{Lemmas \ref{lem:ineqm}, \ref{lem:depinpoly}})\\
=& \sqrt{(\lambda f(\bx)+(1-\lambda)f(\by)})^2\\
=& \lambda f(\bx)+(1-\lambda)f(\by)
\end{align*}
\end{subequations}
where $0<\lambda<1$, $\bx\neq\by$, and $\forall \bx,\by\in\tilde{\mathcal{V}}$.\\
Hence, we obtain the inequality
\begin{equation*}
f(\lambda\bx+(1-\lambda)\by) < \lambda f(\bx)+(1-\lambda)f(\by)
\end{equation*}
Therefore, $f$ is strictly convex.\\
To show the uniqueness of $\bv^\ast$ by contradiction,
suppose $\exists \bw\in\tilde{\mathcal{V}}, \bv^\ast\neq\bw$ such that
\begin{equation*}
f(\bv^\ast)=f(\bw)
\end{equation*}
Since $f$ is strictly convex and $\bv^\ast\neq\bw$,
\begin{equation*}
f(\lambda\bv^\ast+(1-\lambda)\bw) < \lambda f(\bv^\ast)+(1-\lambda)f(\bw), 0<\lambda<1.
\end{equation*}
Since $f( \bv^\ast)= f(\bw)$,
\begin{subequations}
\begin{align*}
f(\lambda\bv^\ast+(1-\lambda)\bw) <& \lambda f(\bv^\ast)+(1-\lambda)f(\bw)\\
=&  \lambda f(\bv^\ast)+(1-\lambda)f(\bv^\ast)\\
=& f(\bv^\ast)
\end{align*}
\end{subequations}
\begin{equation*}
\therefore f(\bv^\ast)> f(\lambda\bv^\ast+(1-\lambda)\bw).
\end{equation*}
Then it contradicts $f(\bv^\ast)$ is a minimum of $f$.
Thus, $\bv^\ast=\bw$. (i.e. $\bv^\ast$ is unique.)\\
Since $\|\bx\|_\bM$ has a unique minimum, a parallel translation of the norm, $\|\bv-\bx \|_\bM$, also has a unique minimum.
\end{proof}

\section*{Appendix B}\label{sec:app-dsc}
A depthwise convolution involves applying a $K \times K$ convolution operation to each input channel independently, resulting in a total number of parameters equal to $K \times K \times 1 \times C_I$. The convolution captures spatial information within each input channel.
Pointwise convolution is a convolution with a kernel size of $1\times 1\times C_O \times C_I$. It combines the output from the depthwise convolution to create the final feature map.
Thus, the total number of parameters is  $K \times K \times C_I + C_O \times C_I$, resulting in a parameter reduction rate denoted as 
\begin{equation*}
\frac{K \times K \times C_I + C_O \times C_I}{K\times K\times C_O \times C_I}=\frac{1}{C_O}+\frac{1}{K^2}.
\end{equation*}
For example, when $K=3$ and $C_O=8$, the convolution parameters are only $23\%$ of those in the $3\times 3\times 8 \times C_I$ standard convolution.
Finally, an encoder $\mu$ contains deep inverted residual blocks:
\begin{subequations}\label{eq:en-deeplayer}
\begin{align*}
&h_1=a(\textsc{CV}_{3\times 3}(\bvcnn))\\
&h_l=\textsc{RB}(h_{l-1}), l=2,\cdots , L-1\\
&h'_{L-1}=\textsc{GAP}(h_{L-1})\\
&h_L = \sigma(\textsc{LN}(h'_{L-1}))
\end{align*}
\end{subequations}
where $a(\cdot)$, $CV_{3\times 3}$, $\textsc{RB}$, $\textsc{GAP}$, and \textsc{LN} are a nonlinear activation function, 
a $3\times 3$ standard convolution, a inverted residual block, a global average pooling, and a linear layer respectively. 
$\textsc{RB}$ generates feature maps that match the size of the inputs without including any bias terms to minimize the number of model parameters, 
except for downsampling in specific layers. 
When downsampling occurs, $\textsc{RB}$ omits computing a residual connection, reducing the size of feature maps by adjusting the stride from 1 to 2. 
Therefore, the final $C_{final}\times H_{final}\times W_{final}$ feature map $h_{L-1}$ is obtained 
where $(H_{final}\times W_{final})$ is  a smaller size, but a channel dimension $C_{final}$ is larger than the input size $C \times H\times  W$.

\section*{Appendix C}\label{sec:app-train}

\begin{algorithm}[t]
\caption{Three-step training strategy for PAEs}\label{alg:pae}
\begin{algorithmic}
\State \textbf{Step 0. Prepare the Model} 
\State Set hyperparameters like $r$, $k$, and the
number of epochs $N_1$, $N_2$, $N_3$
\State Initialize model parameters $\theta$ for the corresponding models
(e.g., $\theta_\mu$, $\theta_\varphi$, $\theta_c$). Note that, e.g., $\brho =
\mu(\theta_\mu; \bv)$ denotes the evaluation with the current values of the
parameters
\State Generate CNN inputs $\bvcnn=\bI_C\bv$
\State \textbf{Step 1. Train a CAE with encoder $\mu$ and decoder $\bar\varphi$} 
\For{$e=1,\cdots,N_1$}
\For{batch iterations: $i \in B_1$}
\State Compute $\brho = \mu(\theta_{\mu};\bvcnn)$
\State Compute $\tilde \bv =\bar\varphi(\theta_{\varphi};\brho)$
\State Compute $\Lrec(\theta_{\mu},\theta_{\bar\varphi};\bvcnn) = \frac{1}{|B_1|}\sum_{i\in B_1}\parallel \tilde \bv(\theta_{\mu},\theta_{\bar\varphi};{\bvcnn}^{(i)}) - \bv^{(i)}\parallel_\bM$
\State Update model parameters $\theta_\mu$ and $\theta_{\bar\varphi}$
\EndFor
\EndFor
\State \textbf{Step 2. Train individual decoders, $\varphi_{1}, \cdots, \varphi_{k}$} 
\State Freeze the weights $\theta_\mu$ of $\mu$
\For{$l=1,\cdots,k$}
\State Select small batch data based on partial $k$-means clustering and save their labels $\bl$
\For{$e=1,\cdots,N_2$}
\For{batch iterations: $i\in B_2$}
\State Compute $\brho = \mu(\bvcnn)$
\State Compute $\tilde \bv =\varphi^{(l)}(\theta_{\varphi^{(l)}};\brho)$
\State Compute $\Lrec(\theta_{\varphi^{(l)}};\brho) = \frac{1}{|B_2|}\sum_{i\in
B_2}\| \tilde \bv(\theta_{\varphi^{(l)}};{\brho}^{(i)}) - \bv^{(i)}\|_\bM$ 
\State Update model parameters $\theta_{\varphi_l}$
\EndFor
\EndFor
\EndFor
\State \textbf{Step 3. Train a PAE containing $\mu$, $\varphi$, and $c$} 
\State Define a matrix $\bU=[\theta_{\varphi_{1}}, \cdots, \theta_{\varphi_{k}}]$ ($\theta_{\varphi}= \{ \theta_{\varphi_{1}}, \cdots, \theta_{\varphi_{k}} \}$)
\For{$e=1,\cdots,N_3$}
\For{batch iterations: $i\in B_3$}
\State Compute $\brho = \mu(\theta_{\mu};\bvcnn)$
\State Compute $\tilde \bv =\varphi(\theta_{\varphi};\brho)=\bU(c(\theta_c;\brho) \otimes \brho)$
\State Compute $\Lrec(\theta_{\mu},\theta_{\varphi},\theta_c;\brho) = \frac{1}{|B_3|}\sum_{i\in B_3}\parallel \tilde \bv(\theta_{\mu},\theta_{\varphi},\theta_c;{\brho}^{(i)}) - \bv^{(i)}\parallel_\bM$
\State Compute $\Lclt(\theta_{\mu},\theta_c;\brho) =-\frac{1}{|P|}\sum_{j\in P\subset B_3} \bl^{(j)}\cdot \log(c(\theta_{\mu},\theta_c;{\brho}^{(j)}))$
\State Compute
$\mathcal{L}=\Lrec(\theta_{\mu},\theta_{\varphi},\theta_c;\brho)+10^{-4}\Lclt(\theta_{\mu},\theta_c;\brho)$
\State Update model parameters $\theta_{\mu}$, $\theta_{\varphi}$, and $\theta_c$
\EndFor
\EndFor
\end{algorithmic}
\end{algorithm}

We train PAEs through the following three steps (Algorithm \ref{alg:pae}):
\begin{itemize}
  \item \textbf{Step 1}: Initialization and identification of the latent space
  \item \textbf{Step 2}: Construction of polytopes for state reconstruction in each cluster
  \item \textbf{Step 3}: Construction of a polytope for state reconstruction
\end{itemize}
See \Cref{sec:tr} for explanations and definitions of the involved
quantities.

\end{document}